\def\rrr#1\\{\par
\medskip\hbox{\vbox{\parindent=2em\hsize=6.12in
\hangindent=4em\hangafter=1#1}}}
\newtheorem{theorem}{Theorem}[section]
\newtheorem{corollary}[theorem]{Corollary}
\newtheorem{definition}[theorem]{Definition}
\newtheorem{problem}[theorem]{Problem}
\titleformat{\section}{\Large\bfseries}{\thesection}{1em}{}
\titlespacing{\section}{3pt}{12pt}{10pt}
\titlespacing{\paragraph}{0pt}{6pt}{0pt}
\renewcommand{\Re}{\mathbb{R}}
\def\marhes{{\sc Marhes}\xspace}
\newcommand{\ie}{{\it i.e.},\xspace}
\newcommand{\eg}{{\it e.g.},\xspace}
\newcommand\bs{\begin{singlespace}}
\newcommand\es{\end{singlespace}}
\newcommand\oprocendsymbol{\hbox{$\bullet$}}
\newcommand\oprocend{\relax\ifmmode\else\unskip\hfill\fi\oprocendsymbol}
\newcommand{\footnoteremember}[2]{
\footnote{#2}
\newcounter{#1}
\setcounter{#1}{\value{footnote}}
}
\newcommand{\footnoterecall}[1]{
\footnotemark[\value{#1}]
}
\begin{document}
\vspace{-0.575in}
	\title{\bf Exploiting Heterogeneous Robotic Systems in Cooperative Missions }
	\author{Nicola Bezzo$^{1}$, Joshua P. Hecker$^{2}$, Karl Stolleis$^{2}$, \\ Melanie E. Moses$^{2}$, and Rafael Fierro$^{3}$\\
			$^{1}${\sc Precise} Center, School of Engineering \& Applied Science\\
			University of Pennsylvania, Philadelphia, PA, 19104, USA \\
			{\small {\tt nicbezzo@seas.upenn.edu}}\\
			$^{2}${\sc Biological Computation} Laboratory, Department of Computer Science\\
			University of New Mexico, Albuquerque, NM 87131, USA \\
			{\small {\tt \{jhecker,stolleis,melaniem\}@cs.unm.edu}}\\
			$^{3}${\sc Marhes} Laboratory, Department of Electrical \& Computer Engineering\\
			University of New Mexico, Albuquerque, NM 87131, USA \\
			{\small {\tt rfierro@ece.unm.edu}}
	}
	\date{}
\maketitle

\begin{abstract}
In this paper we consider the problem of coordinating robotic systems with different kinematics, sensing and vision capabilities to achieve certain mission goals. An approach that makes use of a heterogeneous team of agents has several advantages when cost, integration of capabilities, or large search areas need to
be considered. A heterogeneous team allows for the robots to become ``specialized'', accomplish sub-goals more effectively, and thus increase the overall mission efficiency. Two main scenarios are considered in this work. In the first case study we exploit mobility to implement a power control algorithm that increases the {\em Signal to Interference plus Noise Ratio} (SINR) among certain members of the network. We create realistic sensing fields and manipulation by using the geometric properties of the sensor field-of-view and the manipulability metric, respectively. The control strategy for each agent of the heterogeneous system is governed by an artificial physics law that considers the different kinematics of the agents and the environment, in a decentralized fashion. Through simulation results we show that the network is able to stay connected at all times and covers the environment well. The second scenario studied in this paper is the biologically-inspired coordination of heterogeneous physical robotic systems. A team of ground rovers, designed to emulate desert seed--harvester ants, explore an experimental area using behaviors fine-tuned in simulation by a genetic algorithm. Our robots coordinate with a base station and collect clusters of resources scattered within the experimental space. We demonstrate experimentally that through coordination with an aerial vehicle, our ant-like ground robots are able to collect resources two times faster than without the use of heterogeneous coordination.

\end{abstract}

\section{Introduction}
\label{sec:1}
In recent years we have witnessed an increase in the use of mobile robots for different applications spanning from military to civilian operations. Search and rescue missions, disaster relief operations, and surveillance are just few examples of scenarios where the use of autonomous and intelligent robotic systems is preferred over the use of human first responders. In such operations wireless communication needs to be reliable over the robotic network to maneuver the unmanned vehicles and transmit information. We are interested in heterogeneous robotic systems with agents having different kinematics, sensing behaviors, and functionalities. 
For instance we will consider quadrotor aerial vehicles, that can be approximated as holonomic agents, interacting with ground robots (\eg non-holonomic, car-like agents), both with different communication ranges and sensing and manipulation patterns. Fig. \ref{fig:img1} shows an example of heterogeneous systems with quadrotors cooperating with ground vehicles and crawling agents, all systems acting as communication and sensing relays. 

Within this paper the contribution to the current research on distributed robotic systems is fourfold: {\em i)} we consider heterogeneous robotic systems with different dynamics and realistic communication analysis, sensing geometries, and manipulation constraints, in a decentralized fashion; {\em ii)} we build a power control algorithm for communication purposes to improve the SINR among certain members of the network; {\em iii)} we extend our previous work \cite{andres2011dars, bezzodars2012} considering a more general and realistic scenario while showing that the heterogeneous system stays connected all the time; and {\em iv)} we consider a biologically-inspired scenario to coordinate heterogeneous robots to harvest resources in an unmapped area.
Throughout this work we integrate together several tools for coordination and control of distributed heterogeneous robotic systems.

\begin{figure}[ht!]
\begin{center}
\subfigure[] {\includegraphics[width=0.315\textwidth]{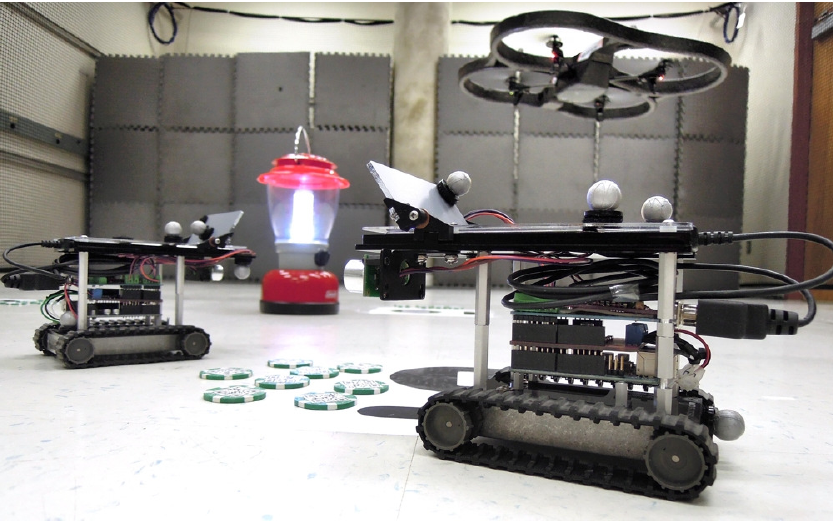}}
\subfigure[] {\includegraphics[width=0.326\textwidth]{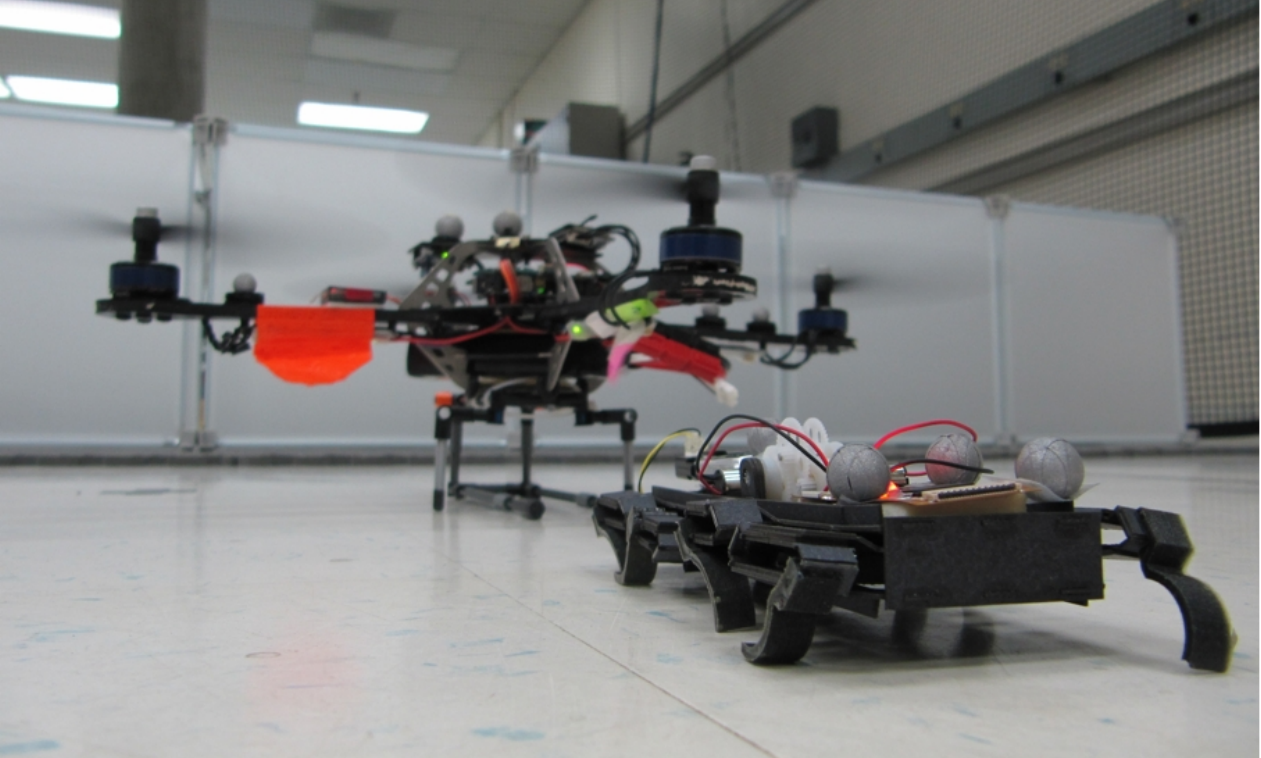}}
\subfigure[] {\includegraphics[width=0.3235\textwidth]{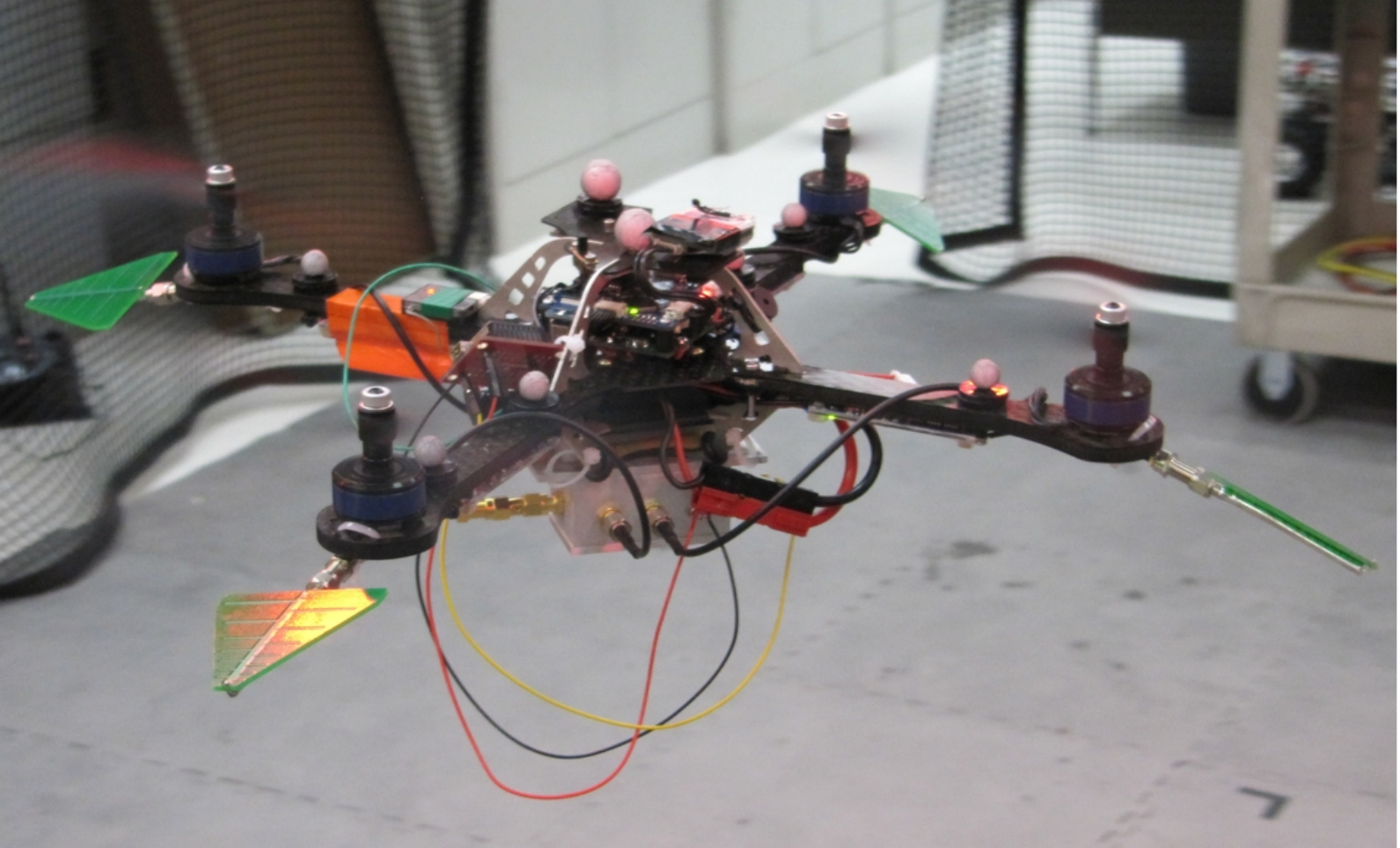}}
\end{center}
\caption{\label{fig:img1}(a) A group of iAnt robots \cite{hecker2012formica} (Biological Computation Laboratory) cooperating with an AR.Drone \cite{Parrot}; (b) Deployment of the OctoRoACH crawling robot \cite{pullin2012dynamic} using a quadrotor \cite{AscTec_Hummingbird}  ({\sc MARHES} Laboratory); and (c) An example of aerial mobile relay with four directional antennas.}
\end{figure}

\subsection{Related Work}\label{sec:Relatedwork}
Heterogeneity in robotic applications is attracting recent attention because of the challenges created by multi-agent systems having different kinematics, sensing, and manipulation capabilities.
Authors in \cite{abbas2011distribution} consider formally a heterogeneous system and analyze its properties based on graph coloring techniques to assign colors to different types of agents. Similarly to the work presented in this paper, authors in \cite{di2011decentralized} use agents with different dynamics and capabilities to execute multiple missions in a decentralized fashion considering task sequencing and a consensus-based technique. In \cite{kumar2010segregation} the authors introduce control laws based on differential potential for aggregation and segregation of biologically inspired heterogeneous agents. 

In missions involving multi-agent systems, it is necessary to consider wireless communication to maintain network connectivity at all times. The robotics and control community are very active in investigating the integration of communication in robotics applications, because the uncertainties found in wireless channels can compromise the performance of the entire multi-agent system. For instance, authors in \cite{hollinger2011autonomous} propose a modified Traveling Salesperson Problem to navigate an underwater vehicle in a sensor field, using a realistic model that considers acoustic communication fading effects. In \cite{hovakimyansingular} a Rician fading model for the communication channel is utilized in a pursuit-evasion game with two mobile agents moving in a cluttered environment. In \cite{zavlanosdistributed} the authors optimize routing probabilities to ensure desired communication rates while using a distributed hybrid approach.
In \cite{bezzo2011disjunctive} we tether a chain of mobile routers to keep line-of-sight communication between a base station and a user  that moves in a concave environment. Authors in \cite{vieira2011mitigating} show extensive experimental results to optimize the communication throughput by making small variations in the positions of agents in the environment. 

Similarly to the work presented in this paper, the authors in \cite{how2011} present a multi-agent system with interaction between aerial and ground vehicles based on task assignment for complex missions. From a graph-theoretical point of view \cite{zavlanos2011graph} surveys graph connectivity in mobile robot swarms, discussing different approaches and algorithms to maintain and optimize connectivity among mobile robot networks. Still from a connectivity perspective, authors in \cite{cezayirlinavigation} use position information to maintain a network of mobile robots connected without the need of communication among the agents.

The communication community has been investigating cognitive radio antennas to improve the SINR in cellular networks \cite{ao2012connectivity}. These devices change the transmission and reception parameters to improve the overall communication quality. One of the most common ways to improve the SINR is to use {\em Power Control} (PC) algorithms in which all wireless devices adjust their power level to reach a desired SINR threshold \cite{zhou2012reinforcement, liu2012optimal, goldsmith2005wireless}. In the work presented in this paper we consider a similar PC approach, but we exploit the mobility of the mobile agents to change the received power at a certain location and reach a desired SINR.

Finally from a sensing point of view, authors in \cite{schwager2011eyes} present an optimization framework to maneuver aerial vehicles equipped with cameras to perceive a certain area based on field of view properties.

The remainder of this paper is organized as follows. In Section \ref{sec:2}, we define the heterogeneous system and formulate the connectivity problem considering relay, sensor, and manipulator agents. In Section \ref{sec:3}, we present the first case study in which aerial vehicles, sensor agents, and mobile manipulators cover a cluttered area in search of a fixed target protected by an opposing player. We analyze a power control method to improve the SINR over the network and we consider sensing and manipulation constraints. In Section \ref{sec:4} we introduce a biologically-inspired technique to coordinate groups of ant-inspired robots together with an aerial drone to collect resources without central control. Simulation and experimental results are presented to validate the proposed strategies. Finally, we draw conclusions and outline future work in Section \ref{sec:5} .

\section{Heterogeneous Connected Robotic System}\label{sec:2}
In this section we give a formal definition of  a heterogeneous robotic network followed by the problem formulation and connectivity constraints used to create interactions among the hybrid network.
\begin{definition}(Heterogeneous System): A network of $\mathcal N$ robots is called {\em heterogeneous} if the members of the network are interconnected, act together toward a common objective and if the following conditions hold:
\begin{itemize}
\item{one or more agents in the network have different motion dynamics with respect to other agents in the system;}
\item{one or more agents in the network have different sensing/manipulation constraints or improved wireless communications abilities with respect to other agents in the systems, but all agents have at least some wireless communication capabilities.}
\end{itemize}\label{def:het}
\end{definition}
\subsection{Heterogeneous Network Topology}
While the theoretical analysis presented here can be generalized for any type of network, we decide to focus on heterogeneous groups made of three types of mobile agents:
\begin{itemize}
\item{$\mathcal N_c$ communication relays with communication range $R_c>0$ and holonomic kinematics (\ie aerial vehicles like quadrotors). The set of relays is denoted by $\mathcal A_c$.}
\item{$\mathcal N_s$ mobile sensors with communication range $0<R_s<R_c$ and non-holonomic kinematics given by the bicycle model
\begin{equation}\label{eq:nh}
\mathbf{u}_k=\left[ \begin{array}{c} \dot{x} \\ \dot{y} \\ \dot{\theta} \\ \dot{\gamma} \end{array} \right] = \begin{bmatrix} \cos(\theta) & 0\;{} \\ \sin(\theta) & 0\;{} \\ \frac{1}{L} \tan(\gamma) & 0\;{} \\ 0 & 1\;{} \end{bmatrix} \left[ \begin{array}{c} v \\ w \end{array} \right], \qquad w = {\lambda}_s({\gamma}_d-\gamma)
\end{equation}
where $L$ is the distance between the front and rear axles, $v$ is the velocity, $w$ is the steering command described by a 1\textsuperscript{st} order linear servo model, ${\lambda}_s$ is the servo gain, and ${\gamma}_d$ is the desired steering angle.
The set of all mobile sensors is denoted by $\mathcal A_s$}.
\item{$\mathcal N_m$ manipulator agents with communication range $0<R_m(=R_s)<R_c$ and non-holonomic kinematics \eqref{eq:nh}.The set of all manipulators is denoted by $\mathcal A_m$.} 
\end{itemize}

The specific problem we are interested in this paper is the following:
\begin{problem}\label{prob:pr}
{\bf Deployment of Heterogeneous Robotic Networks:} Given a heterogeneous robotic network of $\mathcal N$ agents partitioned by $\mathcal N_c$, $\mathcal N_s$, and $\mathcal N_m$, find a set of feasible policies ${\mathbf u}_i \in \mathcal U$ for each agent such that the workspace of interest $\mathcal W$ is well covered, the network is always connected to a fixed base station $b$, and it is possible to reach and manipulate a target  $\mathcal D$ protected by an adversarial opponent $\mathcal T$ having unknown dynamics $\mathbf u_{\mathcal T}$. %
\end{problem}
The adversarial opponent attempts to capture the manipulator agents while the mobile sensors try to pursue and capture the adversary, if detected.

Each agent in the group has some sensing capabilities that are explored in detail in the following sections. For now we will focus on the connectivity problem and formulate an algorithm to expand the network and cover a specific environment.

%
\subsection{Connectivity Constraints}
Following our previous work \cite{andres2011dars} we build a connectivity algorithm by taking advantage of the communication properties of the heterogeneous network. Specifically we formulate connectivity constraints to expand the input set (accelerations, velocities, and in turn positions) the agents can choose from, while still guaranteeing connectivity at all times. 

We define that a relay agent $i$ can communicate with another relay agent $j$ if and only if $j\in \mathcal B_c^i$ with $\mathcal B_c^i=\mathcal B(\mathbf x_i, R_c)$ the ball centered in $i$ of radius $R_c$. A mobile sensor $k$ (or equivalently a mobile manipulator $q$) can communicate with a relay $i$ if and only if $k (\textrm{or} \,q ) \in \mathcal B_s^i=\mathcal B(\mathbf x_i, R_s)$.
However a relay agent $i$ can communicate with a mobile sensor $k$ (or equivalently a mobile manipulator $q$) if $k(\textrm{or} \,q )\in \mathcal B_c^i=\mathcal B(\mathbf x_i, R_c)$. Therefore, by exploiting this last constraint we can expand the sensor and manipulator agents in the environment relaxing continuous bidirectional communication constraints and thus explore a larger area of the workspace. These agents return within range of bidirectional communication with the relay when they have information relevant to the entire network.


At the beginning of a mission we consider a connected graph having the nodes placed in random positions. Also we create the following initial conditions
\begin{eqnarray}\label{eq:conn}
\begin{split}
\forall i \in \mathcal A_c , \exists \, j \in \mathcal A_c, i\neq j \;{\textrm {s.t.}}\; i\in \mathcal B (\mathbf x_j, R_c) \\
\forall k\in \{\mathcal A_s, \mathcal A_m\}, \exists \, j \in \mathcal A_c \;{\textrm {s.t.}}\; k\in \mathcal B (\mathbf x_j, R_c)
\end{split}
\end{eqnarray}

In order to have a uniform graph and maximize the coverage of a space, while maintaining connectivity, the connections between the agents of the heterogeneous system are biased based on the geometry of the communication radii. 
Since the sensor and manipulator agents have limited communication capabilities, the main idea is to have the communication relays connect to each other and expand the entire network in the environment. We consider that each communication relay is equipped with a high performance rf device that offers a large range and bandwidth to handle the communication with multiple nodes. Hence, each sensor and manipulator will be connected directly to a specific communication relay based on the minimum euclidean distance to the closest relay.
The {\em sensor/relay} and {\em manipulator/relay} assignments are built based on a local consensus algorithm described in Algorithm \ref{alg:Alg}.

\begin{algorithm}[h!]                      
\caption{Heterogeneous Local Consensus Algorithm}          
\label{alg:Alg}                           
\begin{algorithmic}                    
\WHILE{$t < t_{\text{final}}$}
\FOR{$i = 1,\ldots,\mathcal N_c$} \STATE Calculate the round down average number of sensor agents in the neighborhood of $i$\\
 $\underline{\tau}_i=\frac{n_i+\sum_{\{j\in \mathcal C_c^i | n_j\leq n_i\}} n_j}{\hat{\mathcal N}_j^i +1}$
\FOR{$j = 1,\ldots,\hat{\mathcal N}_j^i$}
\IF{$\exists \, k \in \hat{\mathcal A}_j^i\; \textrm{s.t.} \; n_i=n_j \forall j\in \{ \hat{\mathcal A}_j^i \setminus k\}$ and $n_k\leq (n_i -2)$}\STATE $\tilde{n}_k=n_k +\frac{(n_k+n_i)}{2}$ \STATE$\tilde n_i=n_i -\frac{(n_k+n_i)}{2}$ \ELSE
\STATE $\tau_j^i=\underline{\tau}_i-n_j$
\STATE $\tilde{n}_j=n_j+\tau_j^i$
\STATE $\tilde {n}_i=n_i-\tau_j^i$
\FOR{$l = 1,\ldots, \tau_j^i$}
\IF{$\exists p \in(\mathcal C_s^i \cap \mathcal B_c^j) \;\textrm{s.t.}\; ||\mathbf x_p-\mathbf x_j||=\min||\mathbf x_q-\mathbf x_j || \forall q\in \mathcal C_s^i$} 
\STATE $p\in \mathcal C_s^j$ and $p \notin \mathcal C_s^i$
\ENDIF
\ENDFOR
\ENDIF
\ENDFOR
\ENDFOR
\RETURN $n_i=\tilde{n}_i$
\ENDWHILE

%
%
%
%
%
%
%
%
%
%
%
\end{algorithmic}
\end{algorithm}
Specifically in Algorithm \ref{alg:Alg}, $\mathcal C_c^i$ and $\mathcal C_s^i$ are the set of neighbor relays and mobile sensors connected to the $i^{\textrm{th}}$ communication relay, respectively. $n_i$ is the number of sensors connected to the $i^{\textrm{th}}$ relay and $\tilde {n}_i$ the updated number of sensors after running the algorithm. Finally $\hat{\mathcal A}_j^i$ is the set containing $\hat{\mathcal N}_j^i$ communication relays connected to $i$ with $n_j\leq n_i$. Note that Algorithm \ref{alg:Alg} applies also to the manipulator agents in which we will have to consider $\mathcal C_m^i$ mobile manipulators' neighbors connected to $i$. If the graph is connected, then we can guarantee the network will reach at least a local consensus that is given by the average number of sensors and manipulators connected to the relays in the neighborhood of the $i^{\textrm{th}}$ relay \cite{olfati2007consensus}.


\section{A Real World Inspired Coordination of Connected Heterogeneous Robotic Systems}\label{sec:3}

In this section we present a framework to coordinate heterogeneous networks of aerial and ground agents while considering realistic communication, sensing, and manipulation constraints. We demonstrate the applicability of the proposed algorithm throughout a pursuit and evasion simulation \cite{bezzodars2012}.

\subsection{Motion Constraints}
$\bullet$ {\em Relay agent}: Given \eqref{eq:conn}, $\forall$ sensors $k \in C_s^i$ (or manipulators $q \in C_m^i$), if $||\mathbf x_i - \mathbf x_{k(q)}||\leq R_{\epsilon}$, with $R_s(R_m)<R_{\epsilon}<R_c$, the motion of the $i^{\textrm {th}}$ communication relay follows the spring-mass interaction
\begin{equation}\label{eq:u}
\ddot {\mathbf{x}}_{i} = \mathbf{u}_i,  \nonumber
\end{equation}
\begin{equation}\label{eq:spring}
\mathbf{u}_i =\left[ \sum_{j\in\mathcal C_c^i} \kappa_{ij}\left( l_{ij}-R_{\epsilon} \right) \hat{\mathbf d}_{ij} \right]-\delta_{i}\dot {\mathbf{x}}_{i}- \nabla_{\mathbf x_{i}}\varsigma(\mathbf x_{i}),
\end{equation}
where $\mathbf{u}_{i} \in \mathcal U$ ($\mathcal U \in \Re^3$) is the control input, ${\mathbf{x}_{i}=(x_{i}, y_{i}, z_{i})^T} $ is the position vector of the $i^{\text{th}}$ relay relative to a fixed Euclidean frame, and $\dot{\mathbf{x}}_{i} $,  $\ddot{\mathbf{x}}_{i}$ denote the velocity and acceleration (control input), respectively. $\mathcal C_c^i $ is the set of neighbor relays connected to the $i^{\text{th}}$ relay. $\mathcal C_c^i$ is built using the Gabriel Graph rule \cite{bezzo2011connectivity} in which between any two nodes $i$ and $j$, we form a virtual spring if and only if there is no robot $k$ inside the circle of diameter $\overline{ij}$, \cite{bezzo2011connectivity}. $l_{ij}$ and $\hat{\mathbf d}_{ij}$ are the length and direction of force of the virtual spring between robot $i$ and $j$ while $\kappa_{ij}$ and $\delta_{i}$ are the spring constant and damping coefficient, respectively. Here, we assume $\kappa_{ij}=\kappa_{ji}$ and $\delta_{i} > 0$.
$\nabla_{\mathbf x_{i}}\varsigma(\mathbf x_{i})=A_{\varsigma}(\mathbf x_{i}-c_{\varsigma})$ is the gradient of $\frac{A_{\varsigma}}{2}\| \mathbf x_{i}-c_{\varsigma}\|^2$, a quadratic attractive potential function where $c_{\varsigma}$ is the center of the region where the target is located and it is known {\em a priori}. 
\begin{theorem}
A network of communication relays having switching dynamics depicted in \eqref{eq:spring} is guaranteed to eventually reach stability in which all agents converge to a null state.
\end{theorem}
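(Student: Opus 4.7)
My plan is to prove the theorem via a Lyapunov-LaSalle argument applied to the closed-loop dynamics in \eqref{eq:spring}, treating the Gabriel-graph rule as a switching signal. First I would construct the candidate Lyapunov function
\begin{equation*}
V(\mathbf{x},\dot{\mathbf{x}}) = \tfrac{1}{2}\sum_{i\in\mathcal{A}_c}\|\dot{\mathbf{x}}_i\|^2 + \tfrac{1}{2}\sum_{(i,j)\in\mathcal{C}_c}\kappa_{ij}\bigl(l_{ij}-R_\epsilon\bigr)^2 + \sum_{i\in\mathcal{A}_c}\tfrac{A_\varsigma}{2}\|\mathbf{x}_i-c_\varsigma\|^2,
\end{equation*}
i.e.\ total kinetic energy plus elastic potential stored in the active virtual springs plus the attractive quadratic well around $c_\varsigma$. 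This function is radially unbounded in $(\mathbf{x},\dot{\mathbf{x}})$, which will be needed to guarantee boundedness of trajectories.

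Next, on any time interval on which the Gabriel graph is fixed, I would differentiate $V$ along \eqref{eq:spring}. Using the symmetry $\kappa_{ij}=\kappa_{ji}$ and the fact that $\nabla_{\mathbf{x}_i}\varsigma=A_\varsigma(\mathbf{x}_i-c_\varsigma)$, the spring and attractive terms cancel the corresponding contributions in $\mathbf{u}_i$, leaving
\begin{equation*}
\dot{V} = -\sum_{i\in\mathcal{A}_c}\delta_i\|\dot{\mathbf{x}}_i\|^2 \le 0,
\end{equation*}
since $\delta_i>0$. Hence $V$ is nonincreasing between switches, and its sublevel sets are forward invariant, giving uniform boundedness of the state.

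To handle the topology switches, I would appeal to the property of the Gabriel graph that an edge $(i,j)$ is inserted or removed exactly when a third agent $k$ crosses the circle of diameter $\overline{ij}$. The set of such switching configurations is measure-zero in state space, and at each such event the active spring $(i,j)$ is inserted/removed at length $l_{ij}$, producing at most a finite jump in $V$ whose sign is not a priori controlled. The cleanest way around this is to observe that the total number of distinct Gabriel graphs on $\mathcal{N}_c$ points is finite and that, because $V$ is radially unbounded and $\dot{V}\le 0$ on every continuous phase, trajectories remain in a compact set; standard results on switched systems with dwell time (applied after showing chattering is generically excluded since only a finite set of topologies can be visited before $V$ must fall below any reactivation level) then yield that $V$ converges and that switches cease in finite time. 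On the final fixed graph I apply LaSalle's invariance principle: the largest invariant set inside $\{\dot{V}=0\}=\{\dot{\mathbf{x}}_i=0\,\forall i\}$ forces $\ddot{\mathbf{x}}_i=0$, so from \eqref{eq:spring} the spring and attractive forces balance; combined with the quadratic well around $c_\varsigma$, trajectories converge to the equilibrium in which all velocities and accelerations vanish—the ``null state'' of the theorem.

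The main obstacle, and the step requiring the most care, is the switching argument: the jumps in the spring-energy term of $V$ at Gabriel-graph events are not automatically nonpositive, so one cannot simply invoke a common Lyapunov function. I would address this either by introducing a hysteresis band around the Gabriel condition (so edges enter/leave at zero spring stretch, making $V$ continuous), or by exploiting the finiteness of the set of reachable topologies together with the strict energy dissipation within each mode to rule out Zeno behavior, before concluding with LaSalle on the terminal fixed graph.
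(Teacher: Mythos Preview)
Your approach is essentially the same one the paper invokes: the paper's entire proof is a one-line appeal to Lyapunov theory in the authors' prior work \cite{bezzo2011connectivity}, and your energy function (kinetic plus spring potential plus quadratic attractive well) together with the damping-driven dissipation $\dot V=-\sum_i\delta_i\|\dot{\mathbf x}_i\|^2$ and a LaSalle argument is exactly the standard machinery that reference develops. The paper itself does not spell out the switching analysis you worry about, so your identification of the Gabriel-graph edge insertions/removals as the delicate step---and your two proposed remedies (hysteresis so edges switch at zero stretch, or a finite-topology/dwell-time argument)---actually goes beyond what the paper states here; either remedy is in the spirit of how such results are typically closed, though the second would need a sharper argument than ``generically no chattering'' to be airtight.
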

\begin{proof}
The proof for this theorem can be formulated using Lyapunov theory and can be found  in our previous work \cite{bezzo2011connectivity}.
\end{proof}

If $\exists \, k\in \mathcal C_s^i$ such that $R_{\epsilon}<||\mathbf x_i - \mathbf x_k||< R_c$ then the relay node believes that the specific sensor agent $k$ is in pursuit mode; therefore it switches into a follower mode with dynamics
\begin{equation}
 \mathbf{u}_i=\alpha(\mathbf{x}_{k}-\mathbf{x}_{i})
 \end{equation}\label{eq:purc}
where $\alpha \in \Re^{+}$. 

$\bullet$ {\em Sensor agent}: We consider the following interaction
\begin{equation}
\mathbf{u}_k=\left\{
    \begin{array}{ll}
        \kappa_{ki}\left( l_{ki}-R_s \right) \hat{\mathbf d}_{ki} -\delta_{k}\dot {\mathbf{x}}_{k} & \textrm{if $k \in \mathcal B (\mathbf x_i, R_s)$}\\
       \mathbf{u}_{\textrm{search}} & \textrm{if $k \in \{\mathcal B(\mathbf x_i, R_{\epsilon})\setminus \mathcal B(\mathbf x_i, R_s)\}$}\\
        \mathbf{u}_{\textrm{pursuit}} & \textrm{if $k$ is in pursuit mode}
          \end{array}\right.,\label{eq:sensd}
\end{equation}
in which with $\mathbf u_{\textrm{search}}$ we intend a random motion within the toroid centered in the $i^{\textrm{th}}$ relay controlling the $k^{\textrm{th}}$ mobile sensor. $\setminus$ is the set-minus operator.

$\bullet$ {\em Manipulator agent}: We similarly consider the following logic
\begin{equation}
\mathbf{u}_q=\left\{
    \begin{array}{ll}
       \kappa_{qi}\left( l_{qi}-R_m \right) \hat{\mathbf d}_{qi} -\delta_{i}\dot {\mathbf{x}}_{q} & \textrm{if $q \in \mathcal B (\mathbf x_i, R_m)$}\\
       \mathbf{u}_{\textrm{search}} & \textrm{if $q \in \{\mathcal B(\mathbf x_i, R_{\epsilon})\setminus \mathcal B(\mathbf x_i, R_m)\}$}\\
       \mathbf{u}_{\textrm{manip}} & \textrm{if $q$ is in manipulation mode}
         \end{array}\right.,\label{eq:mand}
\end{equation}

Specifically for the search modes in both \eqref{eq:sensd} and \eqref{eq:mand} we create the following connectivity constraints: at every $\Delta t$ time the communication relay transmits its location to its neighbors. Given $\mathcal V_{\textrm{max}}^c$ the maximum velocity of  each relay, the maximum distance the relay can travel in $\Delta t$ time is $D_{\textrm{max}}^c=\mathcal V_{\textrm{max}}^c \Delta t$. Thus when in search mode, agents are guaranteed to stay in search mode if and only if in an interval of time $\Delta t$ they don't enter inside the region $\{\mathcal B(\mathbf x_i, (R_{\epsilon}-D_{\textrm{max}}^c))\setminus \mathcal B(\mathbf x_i, (R_{s(m)}+D_{\textrm{max}}^c))\}$.
It is important to note that when in pursuit mode, if $k \in \{\mathcal B(\mathbf x_i, R_c)\setminus \mathcal B(\mathbf x_i, R_{\epsilon})\}$ the communication relay $i$ is not attracted anymore toward the target region but switches into following mode with dynamics \eqref{eq:purc} to maintain connectivity to pursuer $k$. 

We can formulate the following theorem that guarantees connectivity among the heterogeneous network at all times:

\begin{theorem}\label{thm:1}
Given an initially connected heterogeneous robotic system made of $\mathcal N_c$ relays, $\mathcal N_s$ mobile sensors, and $\mathcal N_m$ manipulator agents with switching topologies expressed by \eqref{eq:spring}, \eqref{eq:purc}, \eqref{eq:sensd}, \eqref{eq:mand}, the network is guaranteed to maintain connectivity if for an interval of time $\Delta t>0$, each $j\in C_c^i$, $k\in C_s^i$, and $q \in C_m^i$ take goal points $g_{j(k)(q)}^i\in \mathcal B(\mathbf x_i(t),(R_c-\epsilon(\Delta t)))$, with $\epsilon(\Delta t)\geq\mathcal V_{\textrm{max}}^i \Delta t$
\end{theorem}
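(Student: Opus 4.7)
The plan is to proceed by induction on the discrete instants $t_k = k\Delta t$, maintaining as invariant that at every $t_k$ the heterogeneous graph still satisfies the pairwise containment conditions \eqref{eq:conn}. The base case $k=0$ is exactly the initial hypothesis. For the inductive step I would fix the graph at $t_k$ and show, for every surviving edge $(i,j)$ with $j\in\mathcal C_c^i\cup\mathcal C_s^i\cup\mathcal C_m^i$, that the link range bound (at most $R_c$ for relay-to-relay and the unidirectional relay-to-sensor/manipulator links, and $R_s$ or $R_m$ for bidirectional ones) is preserved throughout $[t_k, t_{k+1}]$, from which the invariant at $t_{k+1}$ follows.

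The central estimate is a triangle-inequality argument. By hypothesis, each neighbor $j$ selects a goal $g_j^i \in \mathcal B(\mathbf{x}_i(t_k), R_c - \epsilon(\Delta t))$. Because the relay dynamics are first-order integrators capped at $\mathcal V_{\textrm{max}}^i$ and the non-holonomic agents \eqref{eq:nh} obey the same kinematic speed bound, I would verify that for every $t'\in[t_k,t_{k+1}]$ both $\|\mathbf{x}_i(t') - \mathbf{x}_i(t_k)\| \le \mathcal V_{\textrm{max}}^i\Delta t \le \epsilon(\Delta t)$ and $\mathbf{x}_j(t') \in \mathcal B(\mathbf{x}_i(t_k), R_c - \epsilon(\Delta t))$ hold, the latter because $j$ moves along (or converges onto) a trajectory whose endpoints both lie in the admissible ball. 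Chaining the two inclusions yields $\|\mathbf{x}_i(t') - \mathbf{x}_j(t')\| \le (R_c - \epsilon(\Delta t)) + \epsilon(\Delta t) = R_c$. The same argument with $R_s$ or $R_m$ substituted for $R_c$ handles the bidirectional sensor and manipulator links, while the search-mode toroid $\{\mathcal B(\mathbf{x}_i, R_{\epsilon} - D_{\textrm{max}}^c)\setminus \mathcal B(\mathbf{x}_i, R_{s(m)} + D_{\textrm{max}}^c)\}$ introduced just above the theorem was engineered precisely so that the unidirectional containment of a searching sensor or manipulator inside $\mathcal B(\mathbf{x}_i, R_c)$ persists over the interval.

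The part I expect to be most delicate is the bookkeeping around the mode-switching logic in \eqref{eq:spring}, \eqref{eq:sensd}, \eqref{eq:mand} together with the Gabriel-graph rewirings of the set $\mathcal C_c^i$. For a pursuit-mode transition, the associated relay adopts the follower controller $\mathbf u_i = \alpha(\mathbf x_k - \mathbf x_i)$, which makes the inter-agent distance nonincreasing so long as the switch fires before the range $R_c$ is violated; this is precisely what the toroidal search-mode constraint enforces, so one must check that the toroid's inner and outer radii leave no time-$\Delta t$ escape route. For the Gabriel-graph updates, removing an edge $(i,j)$ among the relays only happens when some relay $k$ lies inside the disk of diameter $\overline{ij}$, which immediately implies that $(i,k)$ and $(k,j)$ are both within $R_c$, so global relay connectivity is preserved by a one-step triangle-inequality detour. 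Concatenating the pairwise bounds across all inductive steps then yields connectivity of the full heterogeneous network for all $t\ge 0$.
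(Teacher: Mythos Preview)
Your core argument is exactly the paper's: bound the relay displacement over $[t_k,t_{k+1}]$ by $\epsilon(\Delta t)\ge \mathcal V_{\textrm{max}}^i\Delta t$, combine via the triangle inequality with the goal-point constraint $g_j^i\in\mathcal B(\mathbf x_i(t_k),R_c-\epsilon(\Delta t))$, and conclude $\|\mathbf x_i-\mathbf x_j\|\le R_c$. The paper's own proof is a two-line sketch that simply assumes stable agent dynamics and does not separately treat mode switches, the search-mode toroid, or Gabriel-graph rewiring; your inductive framing and the extra bookkeeping around pursuit transitions and edge deletions are a more careful execution of the same idea rather than a different route.
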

\begin{proof}
Assuming that the dynamics of each agent are stable or at least stabilizable, if $g_j^i \in\mathcal B(\mathbf x_i(t),(R_c-\epsilon(\Delta t)))$, then $\mathbf x_i(t+\Delta t)\in\mathcal B(\mathbf x_i(t),\epsilon(\Delta t))$ shifting the communication region of $\epsilon(\Delta t)$, thus leaving $\mathbf x_j(t+\Delta t)\in\mathcal B(\mathbf x_j(t),\epsilon(\Delta t))\subset \mathcal B(\mathbf x_j(t),(R_c-\epsilon(\Delta t)))$. Hence, $j$ is always connected to $i$. Note that $\epsilon(\Delta t)$ is an upper (and safe) bound for the sensor $k$ and manipulator $q$ agents since $\mathcal V_{\textrm{max}}^i>\mathcal V_{\textrm{max}}^k=\mathcal V_{\textrm{max}}^q$. 
\end{proof}

\begin{corollary}\label{cor:1}
A sensor agent $k$ connected to the $i^{\textrm{th}}$ relay with dynamical topology \eqref{eq:sensd} is guaranteed to be in {\em search mode} at all times if $|| \mathbf x_k - \mathbf x_i ||>R_s$ and for an interval of time $\Delta t>0$, $g_k^i\in ((\mathcal B(\mathbf x_i(t),(R_c-\epsilon(\Delta t)))\setminus \mathcal B(\mathbf x_i(t),(R_s+\epsilon(\Delta t))))$ with $\epsilon(\Delta t)$ as of Theorem \ref{thm:1}. 
\end{corollary}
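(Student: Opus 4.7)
The plan is to reduce the corollary directly to the geometric margin argument of Theorem \ref{thm:1}. First, invoking that theorem gives $\mathbf{x}_i(t+\Delta t) \in \mathcal{B}(\mathbf{x}_i(t), \epsilon(\Delta t))$, and because every sensor has maximum speed bounded above by $\mathcal{V}_{\textrm{max}}^i$, the symmetric bound $\mathbf{x}_k(t+\Delta t) \in \mathcal{B}(\mathbf{x}_k(t), \epsilon(\Delta t))$ also holds, so the sensor's own displacement is absorbed by the same safety buffer $\epsilon(\Delta t)$ that controls the relay's motion.

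Second, I would translate the annular hypothesis on $g_k^i$ through the triangle inequality. Writing $r := \|g_k^i - \mathbf{x}_i(t)\|$, the hypothesis asserts $R_s + \epsilon(\Delta t) < r < R_c - \epsilon(\Delta t)$. Combined with $\|\mathbf{x}_i(t+\Delta t) - \mathbf{x}_i(t)\| \le \epsilon(\Delta t)$, the reverse and forward triangle inequalities yield
\begin{equation*}
R_s \;<\; \|g_k^i - \mathbf{x}_i(t+\Delta t)\| \;<\; R_c,
\end{equation*}
so the goal point still sits inside the (un-margined) annulus centered at the updated relay position. Because the sensor tracks $g_k^i$ with $\|\mathbf{x}_k(t+\Delta t) - \mathbf{x}_k(t)\| \le \epsilon(\Delta t)$, a second application of the triangle inequality, starting from the initial condition $\|\mathbf{x}_k(t) - \mathbf{x}_i(t)\| > R_s$ and the fact that $\mathbf{x}_k$ lies on a segment between its previous location and $g_k^i$, shows that $\mathbf{x}_k(t+\Delta t)$ lies in the annulus $\mathcal{B}(\mathbf{x}_i(t+\Delta t), R_c) \setminus \mathcal{B}(\mathbf{x}_i(t+\Delta t), R_s)$, which is precisely the search-mode branch of \eqref{eq:sensd}.

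Finally, I would close the argument by induction on successive intervals of length $\Delta t$: the hypotheses at the end of one interval reproduce those at the start of the next, so the sensor remains in search mode for all $t \ge 0$. The main obstacle I anticipate is the non-monotonic character of $\mathbf{u}_{\textrm{search}}$, which is described as a random toroidal motion rather than a direct drive toward $g_k^i$. To handle it, I would verify that the margin $\epsilon(\Delta t)$ is really what is protected by the worst-case per-step velocity bound $\mathcal{V}_{\textrm{max}}^k \Delta t \le \epsilon(\Delta t)$, independent of direction, rather than relying on any monotone progress of the sensor toward its goal; this decouples the connectivity/search guarantee from the precise shape of the search trajectory.
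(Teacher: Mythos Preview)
Your proposal is correct and follows essentially the same approach as the paper: the paper's own proof is a single sentence stating that the result ``extends from the proof of Theorem~\ref{thm:1} and is based on the geometrical properties of the connectivity constraints imposed in~\eqref{eq:sensd},'' which is precisely the triangle-inequality margin argument you spell out in detail. Your explicit handling of both the inner and outer radii of the annulus, and your closing remark that the worst-case displacement bound $\mathcal{V}_{\textrm{max}}^k \Delta t \le \epsilon(\Delta t)$ is direction-independent (so the random nature of $\mathbf{u}_{\textrm{search}}$ is immaterial), simply make rigorous what the paper leaves implicit.
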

\begin{proof}
The proof for this corollary extends from the proof of Theorem \ref{thm:1} and it is based on the geometrical properties of the connectivity constraints imposed in \eqref{eq:sensd}. Corollary \ref{cor:1} applies to the manipulators agents, as well.
\end{proof}
Within Corollary \ref{cor:1} we allow the communication relays to move freely and expand in the environment while the sensor and manipulator agents are in search mode without entering in other regions. Thus $k$ can take a goal $\in (\mathcal B(\mathbf x_i,(R_c))\setminus \mathcal B(\mathbf x_i,(R_c-\epsilon(\Delta t))))$ only if it detects an intruder. In the same way, $q$ can take a goal $\in (\mathcal B(\mathbf x_i,(R_c)))\setminus \mathcal B(\mathbf x_i,(R_c-\epsilon(\Delta t))))$ if and only if it detects the target $\mathcal T$.


%
%
\subsection{Communication Power Control}
In this section we introduce a power control algorithm to maintain a certain SINR level between the agents of the heterogeneous robotic network.

Depending on the distance between the robots as well as path loss, fading, and shadowing, the power received at a certain mobile sensor (or manipulator) $k$ that is transmitted by a relay $i$ is attenuated by a gain $g_{ik}=K\left(\frac{l_0}{l_{ik}}\right)^{\beta}+\frac{\psi_{ik}}{P_i}$ where $K$ is a constant that depends on antenna properties and channel attenuation, $l_0$ is a reference distance, and $l_{ik}$ is the separation distance between robots $i$ and $k$. $\beta$ is the path-loss exponent and finally $\psi_{ik}$ is an additional attenuation due to shadowing and that is usually a log-normal distributed random variable \cite{goldsmith2005wireless}. $i$ can communicate with $k$ provided its (SINR) is above a certain threshold $T$. Thus the goal is to find whether there exists an assignment of power levels and distance between robots so that each robot's SINR is acceptable. Since $g_{ik}$ depends on the distance separation $l_{ik}$ between node $i$ and $j$, instead of regulating the power, we can think of changing $l_{ik}$ to maintain the SINR above a desired value. Therefore, we can implement the following algorithm
\begin{equation}\label{eq:pc}
{\textrm {find}} \sum_{k\in \mathcal C_c^i} g_{ik} \;\; \forall i=1,\ldots, \mathcal N_c, 
\end{equation}

subject to:
\begin{equation}\label{eq:conpc}
\begin{array}{ll}
\left\{ \frac{P_i g_{ik}}{\sum_{j\neq i, j \in \mathcal B(\mathbf x_k, R_s)} P_j g_{jk}+\nu_k} \right\}\geq T,\\
0<P_i \leq P_{\textrm{max}},  \\
l_{ik}\geq l_{\textrm{min}} & \forall i=1,\ldots, \mathcal N_c, \forall k\in \mathcal C_c^i.
\end{array}
\end{equation}
where $l_{\textrm{min}}$ is the minimum distance separation between any agent in order to avoid collision. $P_{\textrm{max}}$ is an upper bound for the maximum power each agent can have, $P_i$ is the power level of agent $i$, and $\nu_k$ the receiver noise at the $k^{\text{th}}$ sensor. For simplicity's sake we assume that the $\nu_k$ can be neglected.
In other words, by implementing this algorithm we guarantee a certain quality of service (QoS) among the robotic team and adjust the received power at $k$ through mobility.

\subsection{Sensing \& Manipulation}\label{sec:6}
In this work we consider two distinct sensing behaviors: a {\em vision capability} and a {\em obstacle avoidance characteristic}, as depicted in Fig.\ref{fig:sensor}(a-b). The former implies the use of camera like systems in which we are able to perceive different features in the environment and for instance recognize friends and foes, targets and other characteristics of the environment. Within the obstacle avoidance, we consider laser range finder type sensors that are capable of measuring distances with high precision and thus can be employed for navigation. Finally the manipulator agents are equipped with a planar arm to lift or move objects (\ref{fig:sensor}(c)) .
\begin{figure}
\begin{center}
\subfigure[] {\includegraphics[width=0.350\textwidth]{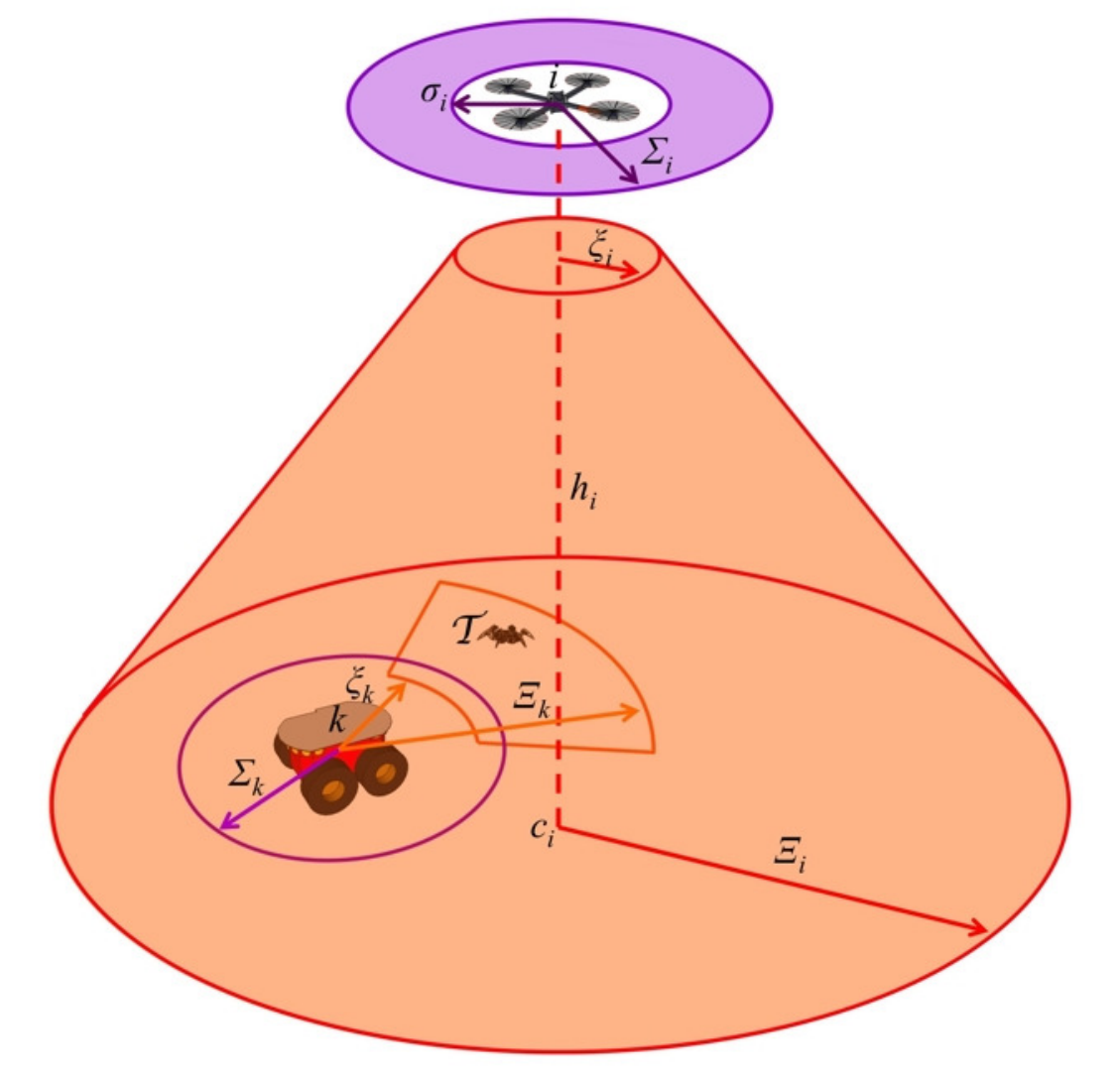}}
\subfigure[] {\includegraphics[width=0.350\textwidth]{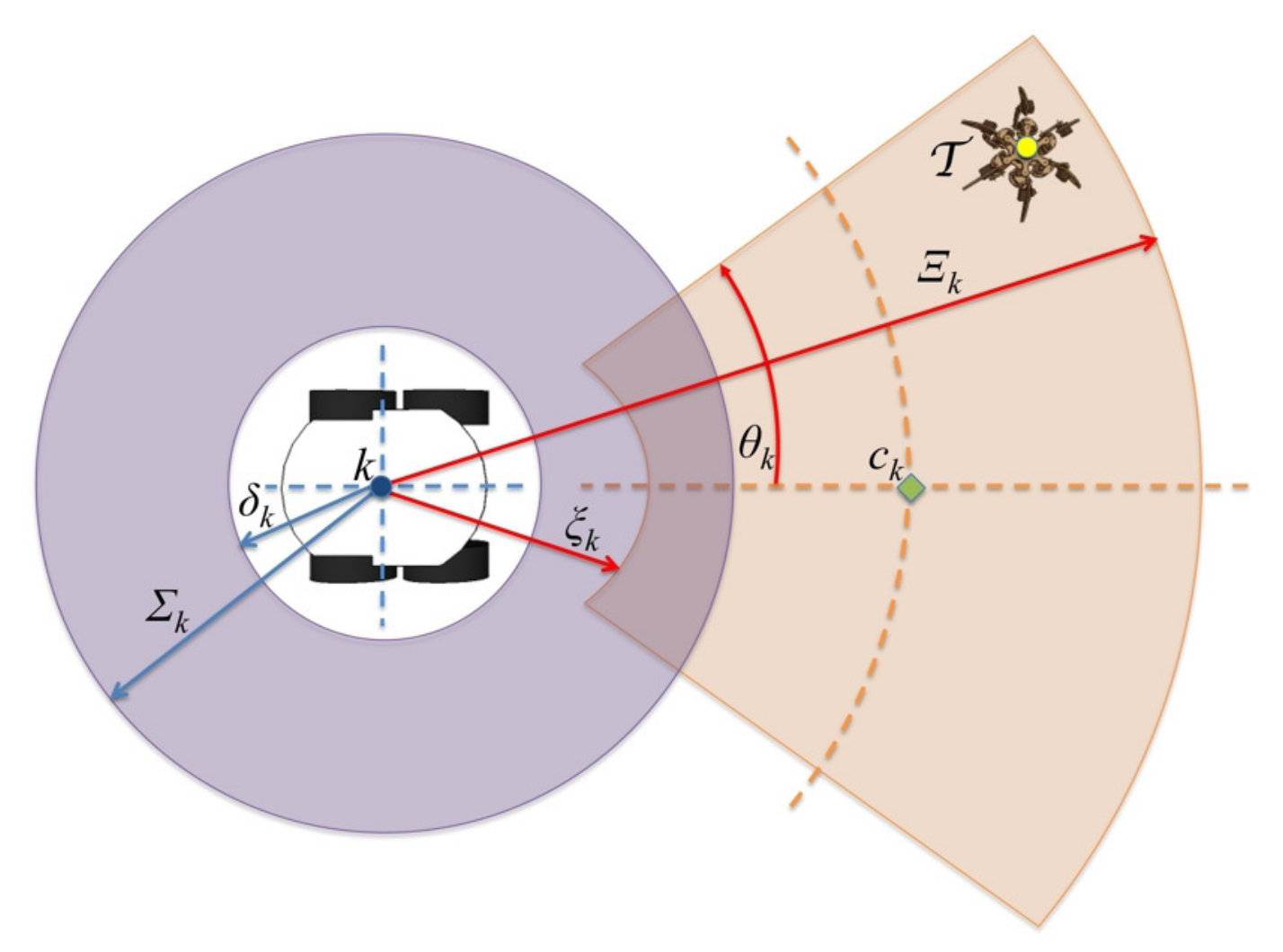}}
\subfigure[] {\includegraphics[width=0.280\textwidth]{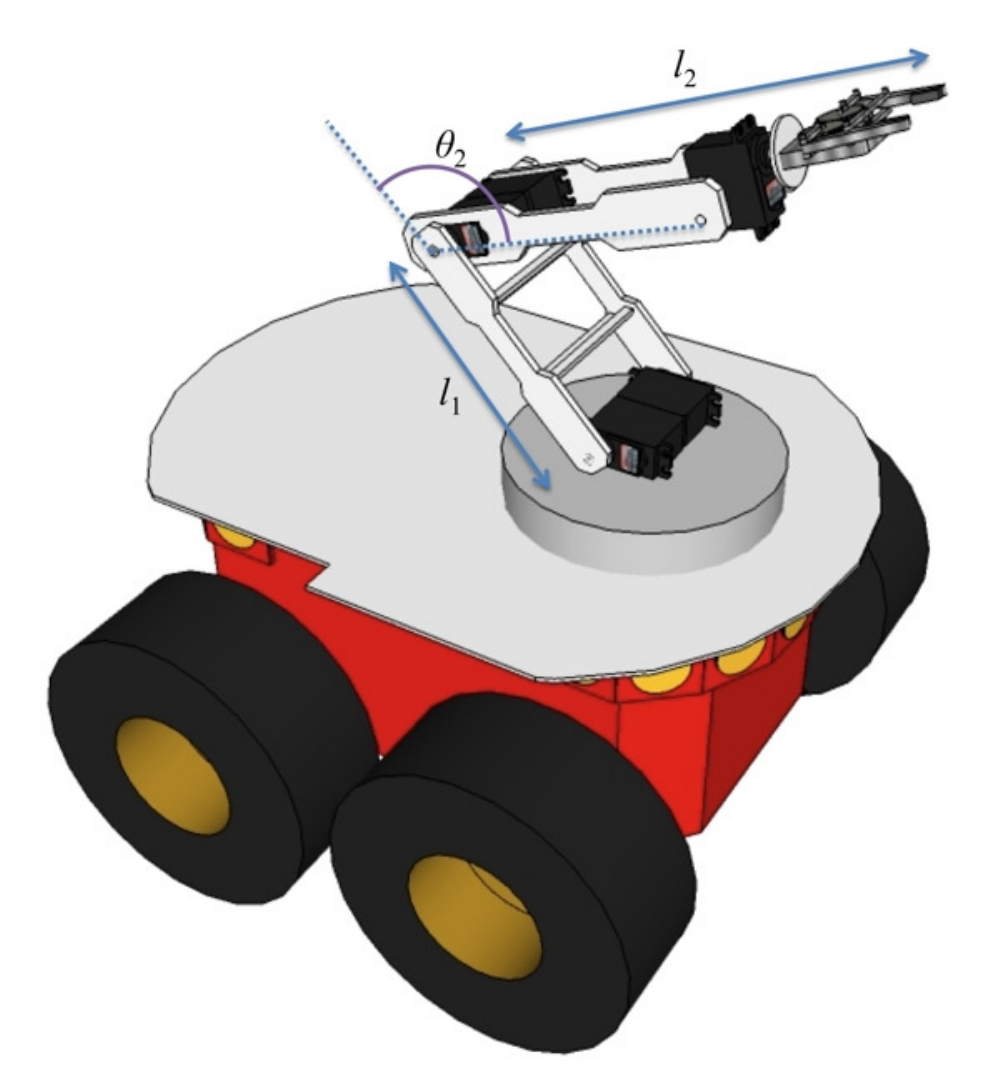}}

\end{center}
\caption{ Representation of the sensing capabilities for the aerial relay and mobile sensor. (a) The aerial relay $i$ hovers at a certain height $h_i$ and has a toroidal sensing for obstacle avoidance and a conical field of view over the ground. (b) The sensor agent $k$ (and also the manipulator $q$) has a toroidal sensing for obstacle avoidance and a limited field of view in front of it. $\mathcal T$ is a target of interest. (c) Representation of the manipulator configuration used in this work. For ease, here we consider a two link planar arm with end effector, installed on the top, frontal position of the robot.}
\label{fig:sensor}
\end{figure}

\subsubsection{Vision Detection}
In the scenario envisioned in this paper, each robot has some degree of vision capability. The aerial relay can see a large area but with low resolution, while the sensor and the manipulator agents on the contrary can perceive a smaller area but with higher resolution.
Following Fig.\ref{fig:sensor}, we use the following probability of detection field for the aerial relays
\begin{equation}\label{eq:vision_r}
\mathcal{S}_i(\mathbf x_{\mathcal T})=\left\{
    \begin{array}{ll}
    \textrm{N}\left( c_i, \varphi_i^2\right)& \textrm{if \,$||\mathbf x_{\mathcal T}-c_i||\leq \Xi_i$ and $h_i=const.$} \\
    0 & \textrm{otherwise}
     \end{array}\right.
\end{equation}
where $\mathbf x_{\mathcal T}$ is the state (\ie the position) of a target ${\mathcal T}$ located in a 3D workspace. $\textrm{N}(c_i, \varphi_i^2)$ is the field of view normal distribution centered in $c_i$ with variance $\varphi_i^2$.   $||\cdot||$~is the euclidean distance norm and $\Xi_i$ is the maximum vision range for the $i^{\textrm{th}}$ aerial relay. Model \eqref{eq:vision_r} holds if the quadrotor is hovering at a constant altitude (hence $h_i=const.$). By using this model, the probability of detection is higher moving toward the centroid of the field of view of $i$. Thus if a target ${\mathcal T}$ in position $\mathbf x_{\mathcal T}$ is such that $||\mathbf x_{\mathcal T}-c_i||\leq \Xi_i$, the probability of detection (pr) is given by
\begin{equation}\label{eq:pr_r}
\textrm{pr}(\mathbf x_{\mathcal T})=\textrm{N}(c_i, \varphi_i^2)=\frac{1}{\varphi_i\sqrt{2\pi}}e^{-\frac{(\mathbf x_{\mathcal T}-c_i)^2}{2\varphi_i^2}}
\end{equation}

A centroid motion scheme, centering the target in this field of view is discussed in the following section
 
For the sensor and manipulator agents, similarly we use the following constraint
\begin{equation}\label{eq:vision_s}
\mathcal{S}_k(\mathbf x_{\mathcal T})=\left\{
    \begin{array}{ll}
    \textrm{N}\left( \left(\frac{ (\Xi_k-\mathbf x_k)-(\xi_k-\mathbf x_k)}{2}\right), \varphi_k^2\right)& \textrm{if \,$\mathbf x_{\mathcal T}\in \theta_k(||\Xi_k-\mathbf x_k||^2-||\frac{(\Xi_k+\xi_k)}{2}-\mathbf x_k||^2)$.} \\
    \textrm{N}\left( \left(\frac{ (\Xi_k-\mathbf c_k)-(\xi_k-\mathbf c_k)}{2}\right), \varphi_k^2\right)& \textrm{if \,$\mathbf x_{\mathcal T}\in \theta_k(||\frac{(\Xi_k+\xi_k)}{2}-\mathbf x_k||^2-||\xi_k-\mathbf x_k||^2)$.} \\
    0 & \textrm{otherwise}
     \end{array}\right.,
\end{equation}
where $\textrm{N}(\cdot,\cdot)$ has the same form of \eqref{eq:pr_r}. $\Xi_k$ and $\xi_k$ are the maximum and minimum distances perceptible by agent $k$, respectively. Finally $2\theta_k$ is the viewing angle of $k$, as represented in Fig.\ref{fig:sensor}(b). 
\subsubsection{Pursuit \& Evasion}
If a mobile sensor $k$ detects an adversarial opponent $\mathcal T$ inside its field of view, it switches from {\em search mode} into {\em pursuit mode} \eqref{eq:sensd} with the following dynamics
\begin{equation}
\mathbf u_{\textrm{pursuit}}=\left\{
\begin{array}{ll}
\alpha (\mathbf{T}_k^{\mathcal W}(\mathbf x_{\mathcal T}^k-c_k^k)) & \textrm{if \,$\mathbf x_{\mathcal T} \in \theta_k(||\Xi_k-\mathbf x_k||^2-||\frac{(\Xi_k+\xi_k)}{2}-\mathbf x_k||^2)$}\\
\alpha(\mathbf x_{\mathcal T}-\mathbf x_k) & \textrm{if \,$\mathbf x_{\mathcal T}\in \theta_k(||\frac{(\Xi_k+\xi_k)}{2}-\mathbf x_k||^2-||\xi_k-\mathbf x_k||^2)$}       
\end{array}\right.,\label{eq:purs}
\end{equation}
in which $\mathbf{T}_k^{\mathcal W}$is the transformation matrix that converts the $k^{\textrm{th}}$ robot frame into the world $\mathcal W$ frame. $\mathbf{T}_k^{\mathcal W}=\mathbf R_k^{\mathcal W} \mathbf D_k^{\mathcal W}$ where $\mathbf R_k^{\mathcal W}$ is the rotation matrix and $\mathbf D_k^{\mathcal W}$ is the translation matrix, \cite{spong2006robot}. $\mathbf x_{\mathcal T}^k$ and $c_k^k$ are the position of the target and field of view centroid in the $k^{\textrm{th}}$ robot frame, respectively.

Thus, within the first equation in \eqref{eq:purs} we navigate the centroid $c_k$ of the field of view of $k$ toward $\mathbf x_{\mathcal T}$. Once $\mathcal T$ is within the region of the constraint in the second equation of \eqref{eq:purs}, $k$ is guided toward the evader through an attractive potential force. Here we assume that based on the velocities of both $k$ and $\mathcal T$, $\mathcal T$ is capturable if \,$\mathbf x_{\mathcal T}\in \theta_k(||\frac{(\Xi_k+\xi_k)}{2}-\mathbf x_k||^2-||\xi_k-\mathbf x_k||^2)$ (that is the constraint in the second equation of \eqref{eq:purs}).
\subsubsection{Obstacle Avoidance}
For the obstacle avoidance effect the reader is referred to the toroidal shapes in Fig.\ref{fig:sensor}.
The workspace, $\mathcal{W}$, is populated with $\mathcal N_o$ fixed
polygonal obstacles $\{{O}_1,\ldots,{O}_{\mathcal N_o}\}$, whose geometries and positions are assumed unknown.
In order to avoid obstacles we model a ray field of view around the agents, similarly to a laser range finder footprint, and we create a repulsive potential whose value approaches infinity as the robot approaches the obstacle, and goes to zero if the robot is at a distance greater than $\Phi_i$ or smaller than $\phi_i$ from the obstacle. 
\begin{equation}
W_{O,i}=\left\{
    \begin{array}{ll}
        \frac{1}{2} \eta_{i} \left(\frac{1}{\rho({\mathbf x}_{i})}-\frac{1}{\rho_{0}}\right)^2& \textrm{if $\phi_i\leq \rho({\mathbf x}_{i}) \leq \Phi_i$} \\
        0 & \textrm{if $\rho({\mathbf x}_{i}) > \Phi_i$ or $\rho({\mathbf x}_{i}) <\phi_i$}
  \end{array}\right.,\label{eq:conob}
\end{equation}
where $\rho({\mathbf x}_{i})$ is the shortest distance between the agent and any detected obstacle in the workspace and $\eta_{i}$ is a constant.

The repulsive force is then equal to the negative gradient of $W_{O,i}$
For simplicity sake, here we assume that the aerial relay, sensor and manipulator agents have all the same obstacle avoidance constraint, as depicted in Fig.\ref{fig:sensor}(a-b).
\subsubsection{Manipulability}
For the manipulation behavior we assume that $\mathcal N_m$ agents are equipped with an articulated arm having $\mathcal N_l$ links. From a classical robotics book, \cite{spong2006robot}, it is well known that the {\em manipulability metric} offers a quantitative measure of the relationship between differential change in the end-effector pose relative to differential change in the joint configuration. In this work, we use this concept to define the best configuration of the manipulator agent such that when a certain target object needs to be handled, the manipulability measure is maximized.

Let us define the Jacobian relationship
\begin{equation}
\zeta=J\dot{a}
\end{equation}
that specifies the end-effector velocity that will result when the joint move with velocity $\dot{a}$.

If we consider the set of joint velocities $\dot{a}$ such that $||\dot{a}||^2=\dot{a_1}^2+\dot{a_2}^2+\ldots \dot{a_{\mathcal N_l}}^2\leq1$, then we obtain
\begin{equation}\label{eq:man1}
||\dot{a}||^2=\zeta^T(JJ^T)^{-1}\zeta=(U^T\zeta)^T\Sigma_m^{-2}(U^T\zeta)
\end{equation}
in which we have used the singular value decomposition SVD $J=U \Sigma V^T$ \cite{spong2006robot}.

If the Jacobian is full rank (rank $J=m$), \eqref{eq:man1} defines the manipulability ellipsoid. It is easy to show that the manipulability measure is given by
\begin{equation}\label{eq:man2}
\mu=\sigma_1\sigma_2\ldots \sigma_m
\end{equation}
with $\sigma_i$ the diagonal elements of $\Sigma$.

For convenience in this work we consider that each manipulator robot is equipped with a two-link planar arm (Fig.\ref{fig:sensor}(c)) in which the manipulability measure is given by
\begin{equation}\label{man3}
\mu=l_1 l_2 |\sin \theta_2|
\end{equation}
where $l_1$ and $l_2$ are the length of the two links of the manipulator and $\theta_2$ is the angle between the two links. Therefore, the highest manipulability measure is obtained when $\theta_2=\pi/2$. 

\subsubsection{Target Manipulation}
If a manipulator agent $q$ detects a fixed target $\mathcal D$ that needs manipulation (\eg lifting, moving, grabbing, etc.) then we apply a similar law as in \eqref{eq:purs} with the only difference that once the target is detected, we compute the configuration of the manipulator arm such that we obtain the highest manipulability measure $\mu$. $\mu$ will translate into a certain spatial configuration and position of the end effector $\mathbf x_{\zeta}$. Finally, $\mathbf x_{\zeta}$ becomes the input for the controller. Thus the control law for $q$ in {\em manipulation mode} becomes
\begin{equation}\label{eq:manipulate}
\mathbf u_{\textrm{manip}}=\alpha (\mathbf{T}_q^{\mathcal W}(\mathbf x_{\mathcal D}^q-\mathbf x_{\zeta}(\mu))) \,\,\, \textrm{if \,$\mathbf x_{\mathcal D} \in \theta_q(||\Xi_q-\mathbf x_q||^2-||\mathbf x_{\zeta}(\mu)-\mathbf x_q||^2)$}
\end{equation}
%
%
%
%
%
%
%
\subsection{Simulation Results}\label{sec:7}

In the simulation of Fig.~\ref{fig:sim2} we assemble together all the pieces descried in the previous sections and consider a search and rescue/pursuit-evasion scenario. A heterogeneous system made of the same number and type of agents as in the first simulation, explores an environment in search of a target $\mathcal D$ that needs to be manipulated, while maintaining connectivity with a fixed base station. The target is protected by an opposing player $\mathcal T$ that circles around its perimeter. $\mathcal T$ tries to capture the manipulator agent $q$, while avoiding any mobile sensor $k$. If a mobile sensor $k$ detects $\mathcal T$, it switches into pursuit mode to capture the opponent. Here we assume that $\mathcal V_{\textrm{max}}^{k}>\mathcal V_{\textrm{max}}^{\mathcal T}>\mathcal V_{\textrm{max}}^{q}$.
 \begin{figure}[ht!]
\begin{center}
\subfigure[] {\includegraphics[width=0.325\textwidth]{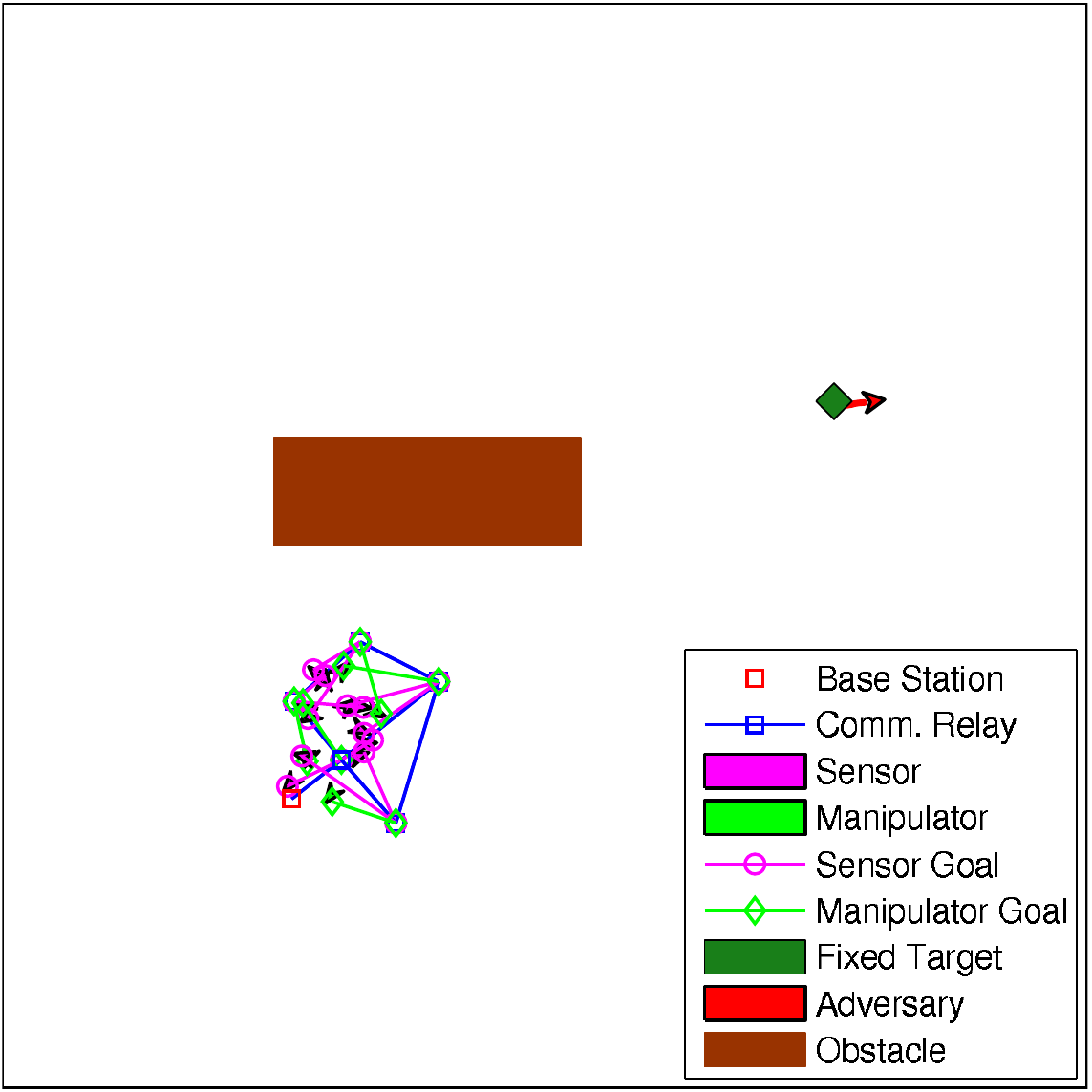}}
\subfigure[] {\includegraphics[width=0.325\textwidth]{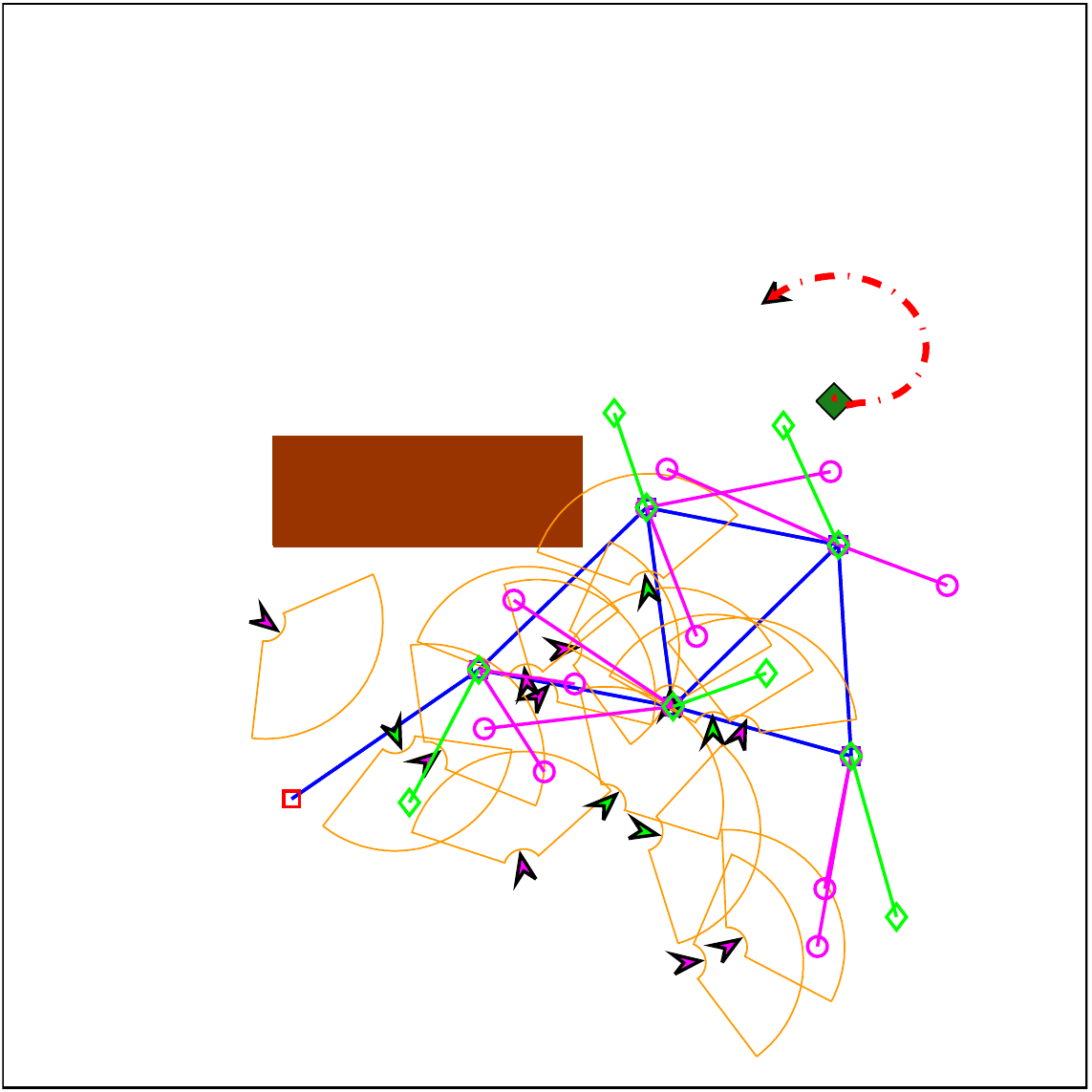}}
\subfigure[] {\includegraphics[width=0.325\textwidth]{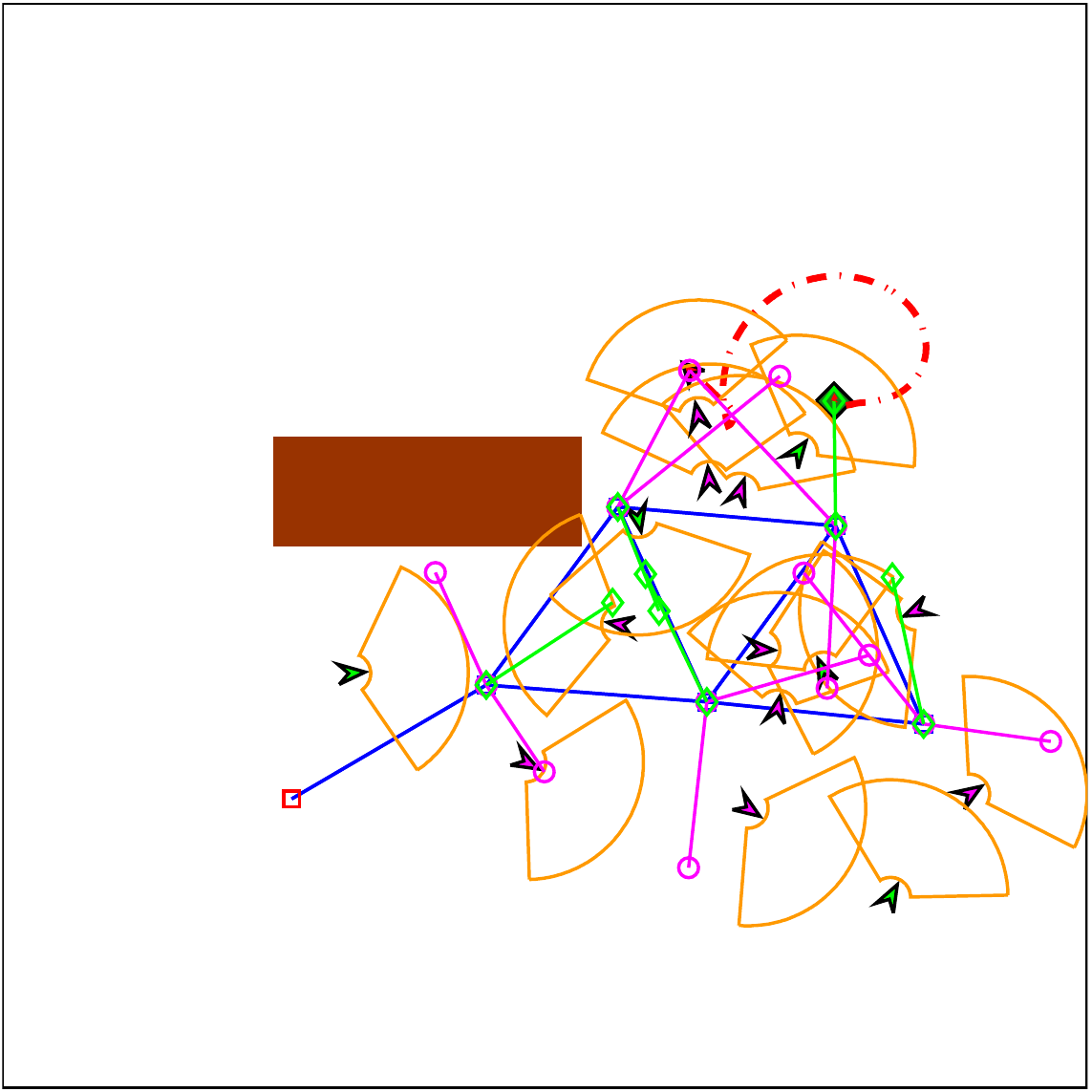}}

\end{center}
\caption{\label{fig:sim2} Pursuit-evasion simulation. (a) The heterogeneous system is in spring-mass mode. (b) The sensor and the manipulator agents are in search mode while the communication relays expand the network in the environment. (c) A sensor detects and pursue an adversarial player (top right of the figure making a circular trajectory), while a manipulator moves toward the fixed target. The video of the simulation is available at {\em http://marhes.ece.unm.edu/index.php/ROBOTICA2013}.} 
\end{figure}

Specifically, during the simulation, while the consensus algorithm \ref{alg:Alg} is run to equilibrate the network, the agents are attracted to a region where the target is located. The relays maintain network connectivity and enforce the power control algorithm \eqref{eq:pc} keeping the SINR above a certain threshold (Fig.~\ref{fig:sinrf}(a)). 
In Fig.~\ref{fig:sinrf}(b) it is plotted the case in which the agents don't follow the PC algorithm. Specifically in this case the SINR can take values below the threshold, obtaining a lower quality of communication. In Fig.~\ref{fig:sim2}(b) the sensor and manipulator robots switch into search mode and explore the area surrounding their assigned communication relay. Finally, in Fig.~\ref{fig:sim2}(c) a mobile sensor detects the opposing player and switches into pursuit mode while a manipulator moves toward the fixed target.

Additionally, through equivalence comparisons, we found that the average coverage for a heterogeneous system is 63$\%$ while for the homogeneous scenario is reduced to 22$\%$ of the workspace. Therefore, by decoupling the tasks of sensing and relaying communication, and by imposing the connectivity constraints for a heterogeneous system we can cover a larger area with a limited number of mobile agents.
\begin{figure}[ht!]
\centering
{\includegraphics[width=0.825\textwidth]{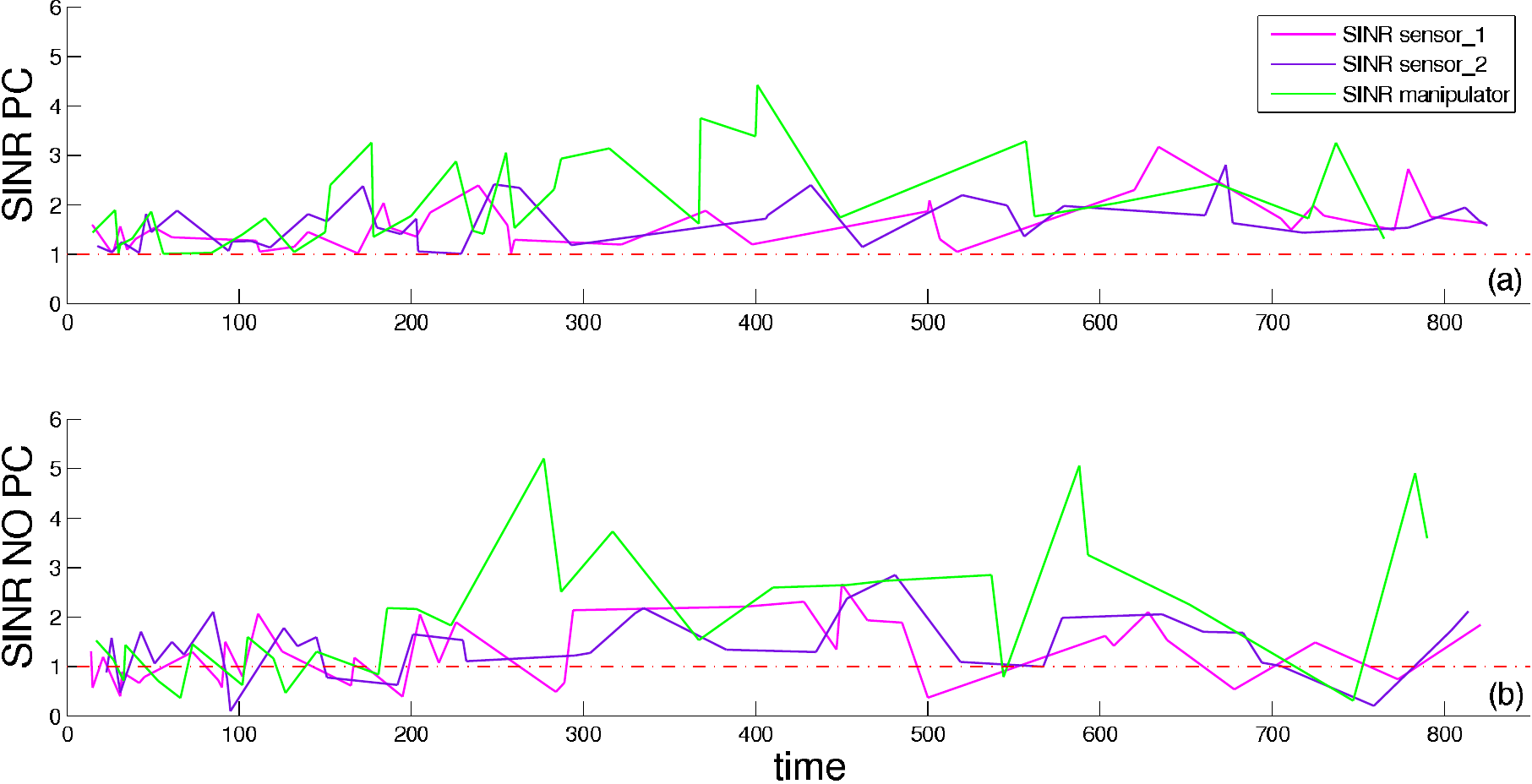}}
\caption{\label{fig:sinrf} Comparison between SINR (dB) with Power Control (top) and without Power Control (bottom) for two mobile sensors and one robotic manipulator assigned to one of the five relays in Fig.~\ref{fig:sim2}.}
\end{figure}

\section{Biologically--Inspired Coordination of Heterogeneous Robotic Systems}\label{sec:4}

In the previous section we demonstrated our heterogeneous coordination framework applied to a simulated pursuit--evasion scenario with a single mobile target. We now apply our framework to a test bed with physical ground and aerial robots searching for multiple stationary resources. In this scenario, robots are programmed to locate and collect as many resources as possible within a fixed amount of time. Our biologically-inspired ground robots (depicted in Fig. \ref{fig:img1}(a)) explore the experimental area using biased random walks. The bias of a robot's walk varies when the robot is informed about resource locations. This precludes an analytical prediction for how long it will take to retrieve resources with or without the aerial drone. Instead we conduct a series of experiments with and without coordination between the ground robots and an aerial drone, demonstrating the value of the heterogeneous coordination framework on a resource collection problem.

Each experimental trial on a 2.5 m square indoor tile surface runs for approximately 30 minutes using two ground robots and a single aerial drone. We begin each trial by randomly placing 32 barcode--style QR tags in clusters throughout the search area, either in one large cluster of 32, or in four smaller clusters of 8. The ground robots are capable of detecting individual tags as they explore the search area over the length of the experiment. Each tag cluster is marked with a corresponding roundel--style pattern which can be recognized by the drone's built--in vision tracking system. The aerial drone assists the ground robots by guiding them toward the tag clusters, thereby increasing the probability of tag detection.

In accordance with our definition of a heterogeneous coordination framework, the ground robots and aerial drone have different motion dynamics and sensing constraints. The aerial drone is fast-moving and holonomic with coarse-grained, wide-ranging vision sensors, whereas ground robots are slow--moving non--holonomic entities with fine--grained, short--range sensors. Equation \eqref{eq:vision_r} constrains the vision sensing capabilities of both types of robots, although the maximum vision range $\Xi$ of the aerial drone is much larger than that of the ground robots. As a result, the drone has a high probability of detecting a region of QR tags by recognizing the associated roundel pattern, but has zero probability of detecting an individual tag $\mathcal T$ at position $\mathbf x_{\mathcal T}$. Conversely, the ground robots have a high probability of detecting tag $\mathcal T$ but a very low chance of finding it on their own.

In this way, each type of robot specializes in recognizing one particular type of symbol, and the spatial association of QR tag clusters together with roundel patterns bridges this gap to facilitate cooperation. We exploit this cooperation to optimize the probability of detecting resources, as in Equation \eqref{eq:pr_r}, and therefore we expect the rate of resource collection to increase relative to non-cooperative or homogeneous robot teams.

\subsection{Cooperative Task and Search Algorithm}

The aerial vehicle executes a deterministic grid search over the experimental area, described in Algorithm \ref{alg:search_algorithm_aerial}. Resource locations identified by the aerial robot are transmitted to the ground robots as pheromone-like waypoints. Communication connectivity is constrained as in Section 2.2, such that the aerial vehicle is assumed to have a much wider communication range than the ground robots. The entire experimental area is considered to be a valid range for two--way communication with the drone, whereas transmission of resource locations to ground robots is permitted only when returning to the central base station `nest'.

Our ground robots execute an ant--inspired central--place foraging task to search for resources simultaneously with the aerial vehicle, described in Algorithm \ref{alg:search_algorithm_ground}. They move in a correlated random walk with direction $\theta$ at time \emph{t} drawn from a normal distribution centered around direction $\theta_{t-1}$ and standard deviation ${SD = \omega+ \gamma/t_s^\delta}$. $\omega$ determines the degree of turning during an \textbf{uninformed} search. In a search \textbf{informed} by memory or communication,  $\gamma/t_s^\delta$ determines an initial additional degree of turning which decreases over time spent searching. This mimics biological ants' tight turns in an initially small area that expand to explore a larger area over time \cite{moses2011how}.

A total of 13 parameters controlling the exploratory process of the ground robots are evolved in an agent--based model (ABM) guided by genetic algorithms (GA). Three of these parameters govern traveling behavior to and from the nest, three control turning during random walks, and seven affect the probability of returning to a found resource location using memory or communication. We designed the ABM to replicate the constraints of the robot hardware, and to model the physical environment in which the robots search. We briefly address a subset of the parameters here (\eg $\omega$, $\gamma$, and $\delta$; also see Algorithm \ref{alg:search_algorithm_ground}); specifications of the full parameter set, the ABM, and it's relationship with the ground robots are detailed in previous work \cite{hecker2012formica}.

\begin{algorithm}[t]
\caption{Aerial Robot Search Algorithm}
\label{alg:search_algorithm_aerial}
\begin{algorithmic}
\ENSURE {Drone placed at starting location}
\STATE Take off and hover \\
Follow search pattern in Fig. \ref{fig:search_algorithm_aerial}\subref{fig:complete_search} by adjusting pitch and roll\\
	\IF{roundel found} \STATE{
		Transmit message to central server\\
		Server records drone's location as a non-decaying pheromone waypoint
	} \ENDIF
\end{algorithmic}
\end{algorithm}

\begin{algorithm}[!ht]
\caption{Ground Robot Search Algorithm}
\label{alg:search_algorithm_ground}
\begin{algorithmic}
\STATE Disperse from nest to random location
\WHILE{running experiment} \STATE{
	Conduct uninformed correlated random walk\\
	\IF{tag found} \STATE{
		Count number of nearby tags $d$ at current location $l_f$\\
		Return to nest with tag
		\IF{d $> t_p$\footnoteremember{myfootnote}} \STATE{
			Communicate $l_f$ as pheromone-like waypoint to central server\\
			Server transmits pheromones to robots at nest\\
			Server decays pheromone over time
		}
		\ELSE
			\IF{d $> t_f$\footnoterecall{myfootnote}} \STATE{
				Return to $l_f$\\
				Conduct informed correlated random walk
			}
			\ELSE \STATE{
				Request pheromone  from server
				\IF{pheromone available \AND d $<t_h$\footnoterecall{myfootnote}} \STATE{
					Drive to received waypoint $l_p$\\
					Conduct informed correlated random walk
				}
				\ELSE \STATE{
					Choose new random location
				} \ENDIF
			} \ENDIF
		\ENDIF
	} \ENDIF
} \ENDWHILE
\end{algorithmic}
\footnoterecall{myfootnote}\footnotesize{Parameter evolved in simulation using GA, identical across all robots}
\end{algorithm}

\subsection{Test bed}

The ground robots are Arduino--based iAnts \cite{hecker2012formica}, autonomous differential drive rovers with dynamics expressed in \eqref{eq:nh}. They are guided only by a heuristic search algorithm with occasional communication via pheromone--like waypoints transmitted between robots through a central server (Fig. \ref{fig:search_algorithm_ground}). The iAnts use onboard iPods that provide wireless communication and two video cameras, in addition to a compass and ultrasonic rangefinder.
\begin{figure}[ht!]
	\centering
	\subfigure[]{
	\includegraphics[scale=1]{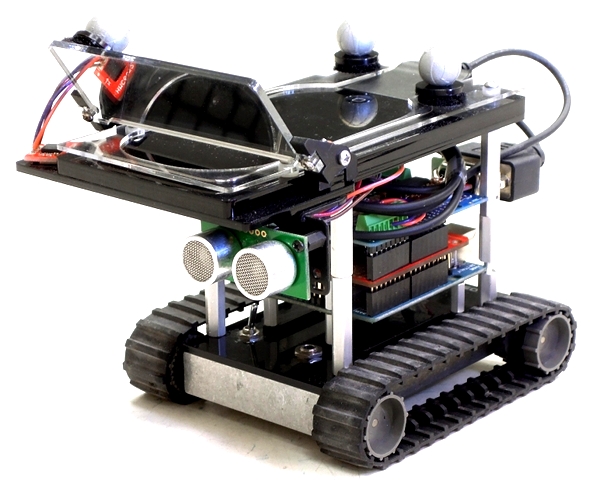}
	\label{fig:robot}
	}
	\subfigure[]{
	\includegraphics[scale=.37]{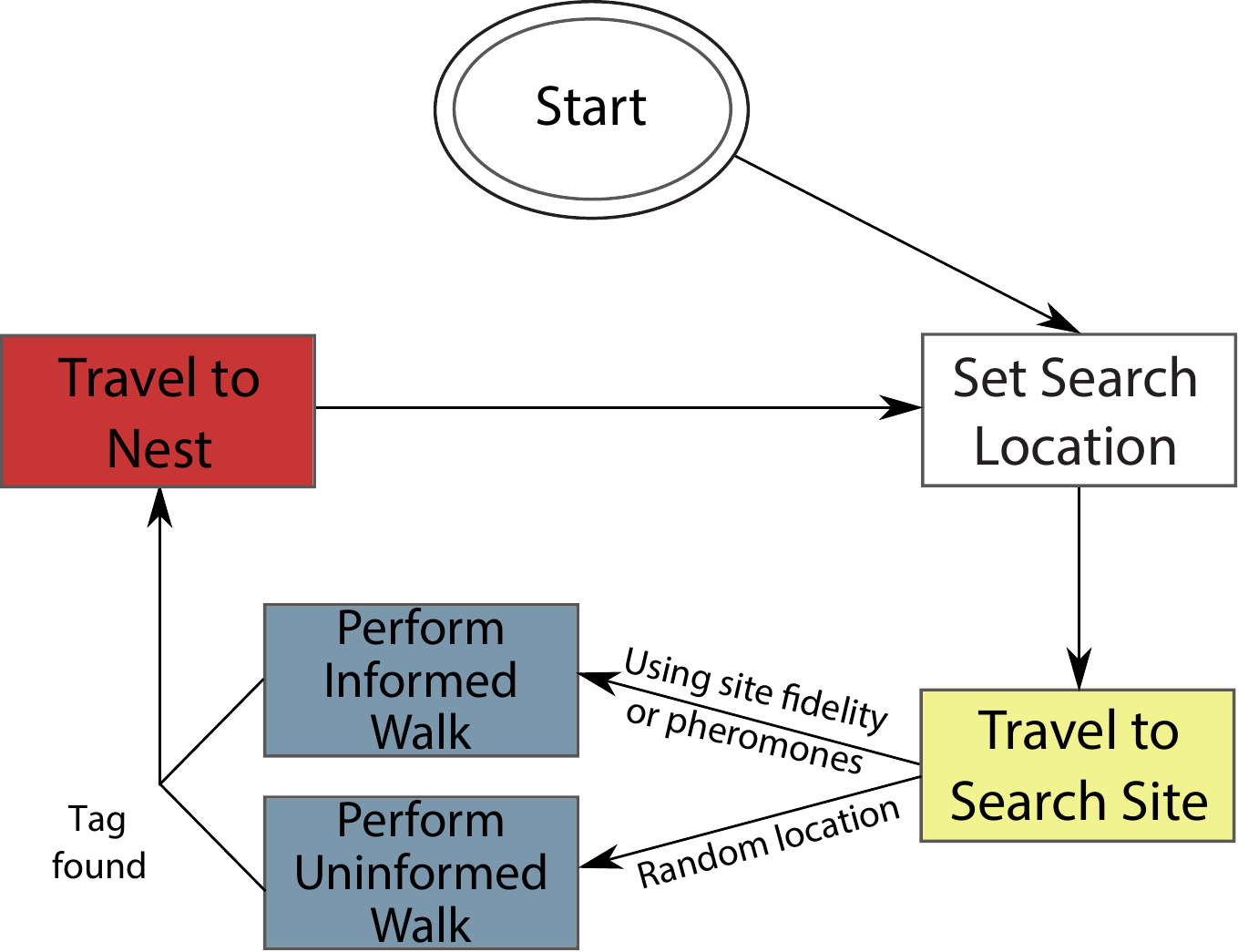}
	\label{fig:state_diagram}
	}
	\subfigure[]{
	\includegraphics[scale=.53]{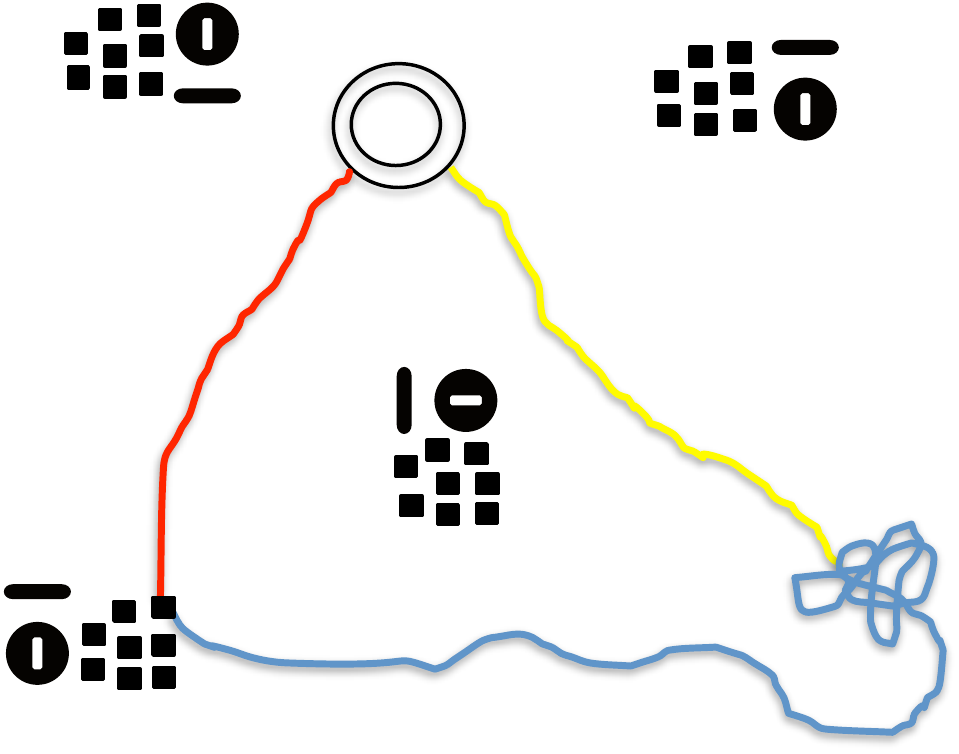}
	\label{fig:tag_search}
	}
	\caption{\protect\subref{fig:robot} One of the iAnt robots used in our heterogeneous experiments; \protect\subref{fig:state_diagram} block diagram with the sequence of operations performed during the search; and \protect\subref{fig:tag_search} a pictorial representation of the search algorithm. The iAnt begins its search at a globally shared central nest site (double circle) and \textbf{sets a search location}. The robot then \textbf{travels to the search site} (yellow line). Upon reaching the search location, the robot \textbf{searches for tags} (blue line) until tags (black squares) are found. After searching, the robot \textbf{travels  to the nest} (red line).}
	\label{fig:search_algorithm_ground}
\end{figure}

Our ground robots coordinate with an aerial vehicle, the Parrot AR.Drone radio--controlled quadrotor (Fig. \ref{fig:search_algorithm_aerial}). The AR.Drone contains an onboard inertial measurement unit (IMU) to control holonomic flight through 6 degrees of freedom (DOF), as well as two video cameras with built--in object recognition, and uses an ultrasound telemeter to maintain a consistent hover state.

A central server facilitates all network traffic between the robots, both ground and aerial. The server also tracks robot locations throughout the experiment for data logging; occasional two--way communication allows virtual pheromones to direct the iAnts to previously found tag locations. 

We additionally use a Vicon motion capture system to track the iAnts' and AR.Drone's ground truth position and orientation. Lateral and longitudinal position measurements of the drone are transmitted to the central server as $(x,y)$ pairs whenever the drone detects a roundel pattern. Roundels are placed directly adjacent to each cluster of tags as in Fig. \ref{fig:search_algorithm_ground}\subref{fig:tag_search}. The server records each position measurement as a virtual pheromone to be transmitted to any iAnt returning to the central nest.

\begin{figure}
	\centering
	\subfigure[]{
	\includegraphics[scale=1.2]{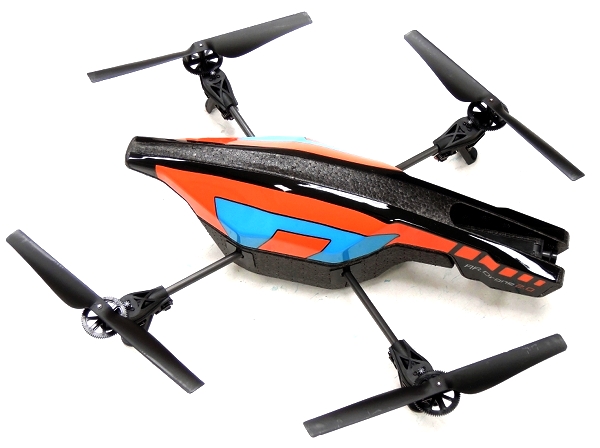}
	\label{fig:drone}
	\hspace{5mm}
	}
	\subfigure[]{
	\includegraphics[scale=1.1]{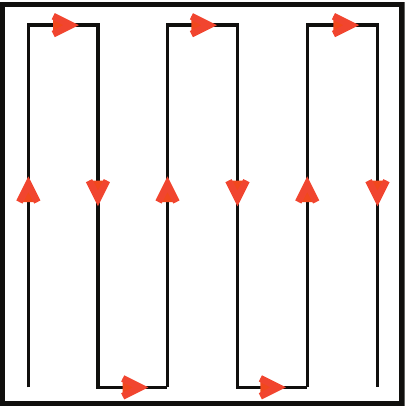}
	\label{fig:complete_search}
	}
	\hspace{5mm}
	\subfigure[]{
	\includegraphics[scale=.2]{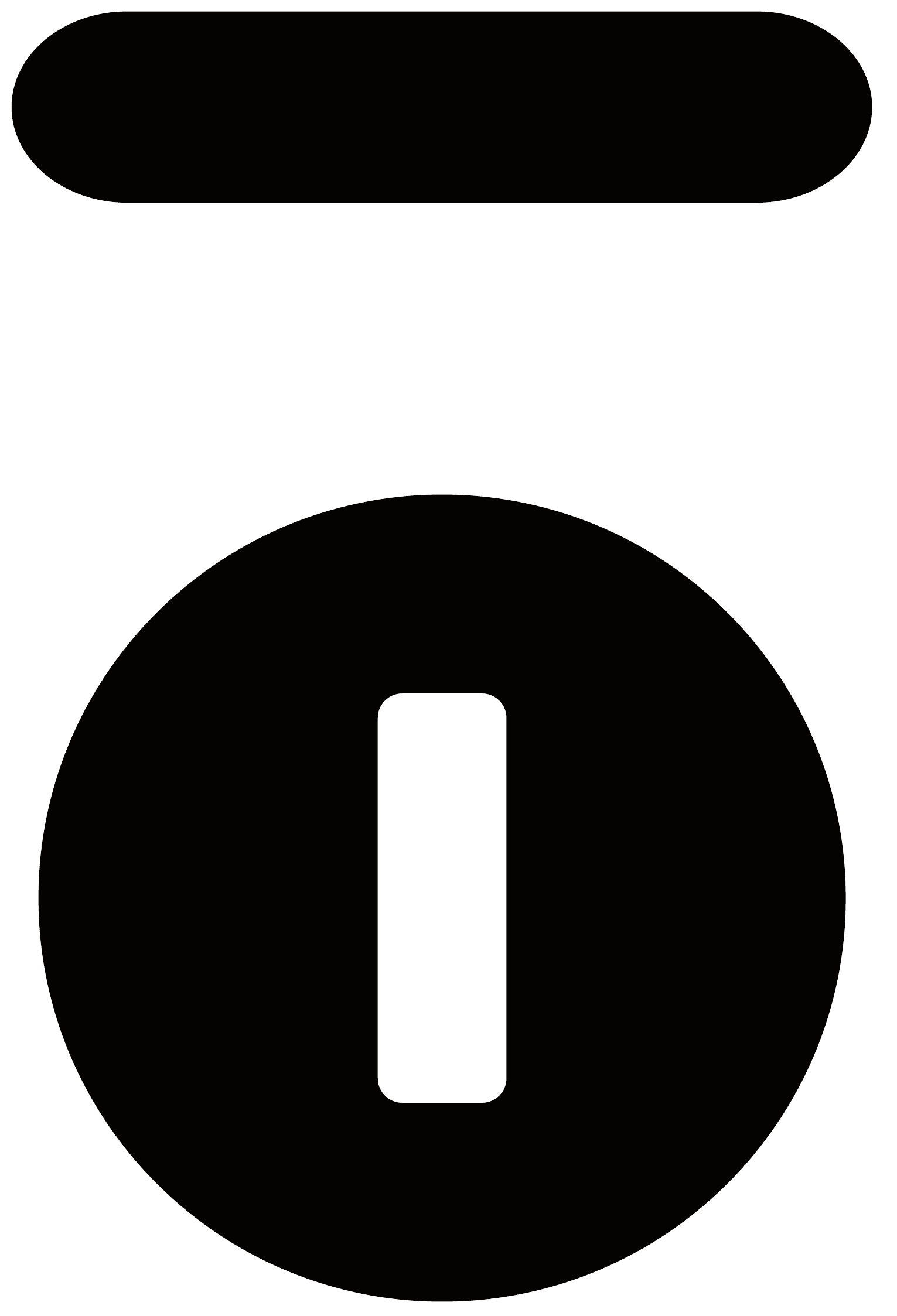}
	\label{fig:roundel}
	}
	\caption{The Parrot AR.Drone \protect\subref{fig:drone} uses a deterministic grid search \protect\subref{fig:complete_search} to explore the experimental area using a built--in object recognition system to detect roundel patterns \protect\subref{fig:roundel}.}
	\label{fig:search_algorithm_aerial}
\end{figure}
 
\subsection{Results}

We present an analysis of the rates at which iAnts retrieve tags from two different distributions, with and without heterogenous coordination with the AR.Drone. Results for each experimental treatment are averaged over three replicants; error bars denote standard error of the mean.

Fig.~\ref{fig:experiment}\subref{fig:live_experiment} shows a snapshot of the one of the experiments conducted at the \marhes Lab with our test bed of heterogeneous robotic systems. A video of the experiment is available at {\em http://swarms.cs.unm.edu/videos.html}.

Fig.~\ref{fig:experiment}\subref{fig:tag_discovery_rate} shows the rate of tag collection per hour of experiment time. In both the single, large cluster and multiple, smaller cluster distributions, we observe that robot teams using heterogeneous coordination produce tag collection rates more than double those without coordination.

\begin{figure}[ht!]
	\centering
	\subfigure[]{
	\raisebox{1mm}{\includegraphics[scale=.78]{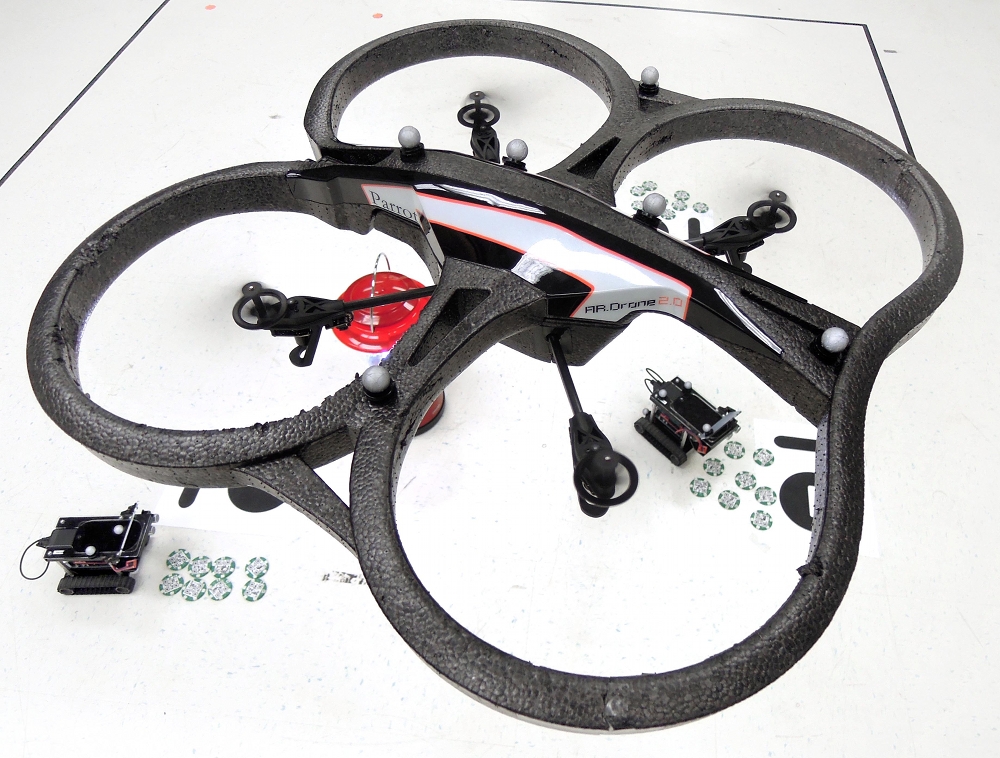}}
	\label{fig:live_experiment}
	}
	\hspace{5mm}
	\subfigure[]{
	\includegraphics[scale=.29]{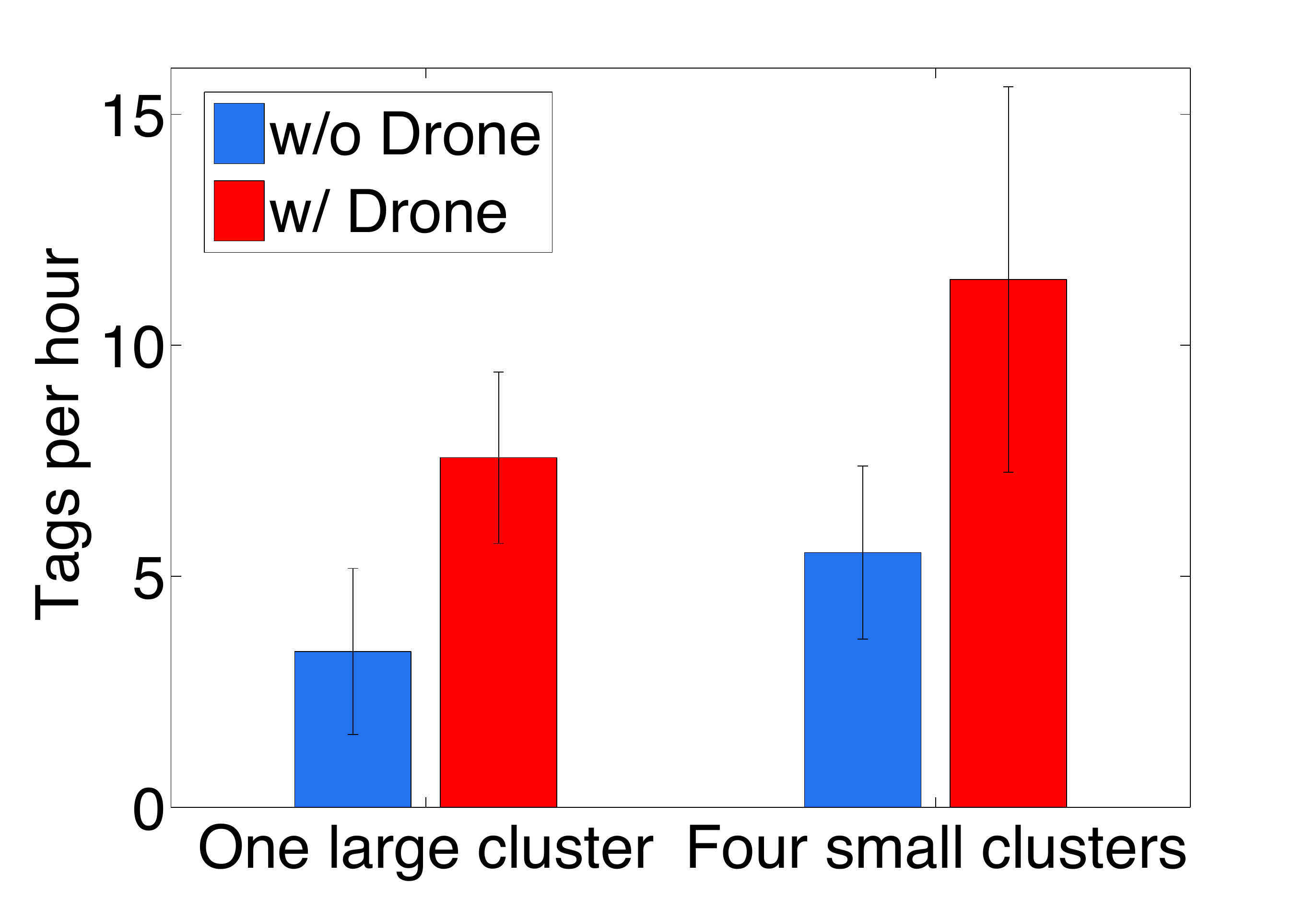}
	\label{fig:tag_discovery_rate}
	}
	\caption{\protect\subref{fig:live_experiment} Physical experiment in progress. \protect\subref{fig:tag_discovery_rate} Rate of tag discovery per hour of experiment time for large and small clusters with and without heterogeneous coordination with the aerial drone. Each bar denotes a mean calculated over three replicants; error bars show standard error of each mean. The video of the experiment is available at {\em http://swarms.cs.unm.edu/videos.html}.}
	\label{fig:experiment}
\end{figure}

\section{Conclusion}\label{sec:5}

In this work, we combined realistic communication, sensing, manipulation, and different dynamical models to improve the performance of a group of autonomous agents. We managed multiple types of robots using our heterogeneous coordination framework to improve the search, coverage, and thus the sensed areas in a workspace.

Besides using realistic sensing and vision capabilities, we have analyzed a communication connectivity algorithm to equilibrate and maintain the network connectivity while exploring the environment, and a power control algorithm to guarantee a certain SINR level between the relay and the sensor and manipulator agents. We presented a simulation scenario in which we implemented a search/pursuit-evasion scenario with a heterogenous network. The robots expand in the environment attracted by potential functions, avoid a large obstacle and reach a partially known target that is manipulated by a specific agent of the system.

We also presented an experiment performed with a heterogeneous system of physical ground and aerial vehicles. We demonstrated that our biologically-inspired ground robots can collect resources twice as fast through coordination with an aerial vehicle. The drone quickly covers and searches a large area  with low-resolution vision sensors, while the slower ground agents perceive smaller but higher resolutions portions of the area. By exploiting the specific characteristics of different agents in our heterogeneous system, we showed that the overall mission performance can be improved.

In future work, we will expand the theoretical results of this paper by introducing more realistic environments with non-convex obstacles and more opponents, and we will evaluate the robustness of our system by simulating failures (\eg some agents stop working).  We will also implement additional experiments with several different types of heterogeneous robotic systems, and test our system on different distributions of resources.
%

\section*{Acknowledgment}
This work was supported by the Micro Autonomous Systems \& Technology (MAST) Program, by NSF grants ECCS \#1027775, IIS \#0812338, and EF \#1038682, by the Department of Energy URPR Grant \#DE-FG52-04NA25590, and by DARPA grant CRASH \#P-1070-113237. 

%


\bibliographystyle{IEEEtran}

\bibliography{nicola_refs_ifac,commotion_refs,nicola_refs2,QuadrotorCommRelayRefs}


%
%
%
%
%


\end{document}